\DeclareMathOperator*{\argmin}{argmin} 
\def\tsc#1{\csdef{#1}{\textsc{\lowercase{#1}}\xspace}}
\newtheorem{definition}{Definition}
\newtheorem{theorem}{Theorem}
\begin{document}
\let\WriteBookmarks\relax
\def\floatpagepagefraction{1}
\def\textpagefraction{.001}

\shorttitle{ }

\shortauthors{Xinlin~Leng.}

\title [mode = title]{ Federated Coordinate Descent for Privacy-Preserving Multiparty Linear Regression}

\author[1]{Xinlin~Leng}[style=chinese]

\author[2]{Chenxu~Li}
[style=chinese]
\author[1]{Weifeng~Xu}
[style=chinese]
\author[3]{Yuyan~Sun}
[style=chinese]

\author[1,4]{Hongtao~Wang}[style=chinese]

\cormark[1]

\address[1]{School of Control and Computer Engineering, North China Electric Power University, Baoding, 071051, China}
\address[2]{School of Computer Science and Technology, Shandong University of Finance and Economics, Jinan, 250014, China}
\address[3]{Beijing Key Laboratory of IoT Information Security, Institute of Information Engineering, Chinese Academy of Sciences, Beijing, 100093, China}
\address[4]{Hebei Key Laboratory of Knowledge Computing for Energy \& Power, Baoding, 071051, China}
\cortext[cor1]{Corresponding author}

\begin{abstract}
Federated linear regressions have been developed and applied in various domains, where multiparties collaboratively and securely perform optimization algorithms, e.g., Gradient Descent, to learn a set of optimal model weights.
    However, traditional Gradient-Descent based methods fail to solve problems which contain $\mathcal{L}_1$ regularization in objective functions, such as lasso regression.
    In this paper, we present Federated Coordinate Descent, a new distributed scheme called FCD, to address this issue securely under multiparty scenarios.
    Specifically, through secure aggregation and noise perturbations, our scheme guarantees that: (1) no local information is leaked to other parties, and (2) global model weights are not exposed to cloud servers.
   The proposed FCD scheme fills the gap between multiparty secure learning and coordinate descent methods, and is applicable for general linear regressions, including linear, ridge, and lasso regressions.
    Theoretical analyses prove the security guarantees of our scheme against semi-honest attackers.
    Experimental results on three UCI datasets demonstrate that regressions under our FCD scheme are effective as centralized models. 
    The performance evaluation on synthetic dataset also reveals that our scheme achieves efficient  properties on computation cost and communication overhead.

\end{abstract}

\begin{keywords}
Coordinate Descent \sep Privacy-Preserving \sep Linear Regression \sep Lasso \sep Federated Learning
\end{keywords}

\maketitle

\section{Introduction}

Linear regression is one of the fundamental models in machine learning. 
It have been applied to a broad applications, such as decision support systems~\cite{es/TelesRKRA21}, time series forecasting~\cite{pr/IlicGCB21}, climate prediction~\cite{kdd/LiuCGD19}, smart grid~\cite{tsg/LiuZWYK19}, and signal processing~\cite{tsp/HellkvistOA21}, to name a few.
Traditional linear regression model usually performs in a centralized way, where data is gathered and processed on a central server.
However, centralized linear regression suffers from possible privacy leakage of sensitive data owned by distributed parties~\cite{icml/HammCB16}.

To tackle this issue, distributed linear regression approaches have recently been proposed, in which  multiple parties collaboratively learn a global model by sharing a subset of the parameters~\cite{tit/YuanPS21,tac/YangZYS21,jmlr/DobribanS20,aaai/HuWW20}. 
Although private local data is not directly exposed, sharing model  parameters may also leak sensitive information by insider attackers~\cite{ccs/HitajAP17,sp/MelisSCS19}.
In recent years, a new distributed machine learning paradigm was proposed to address such a problem, namely, federated learning.
It aims to provide privacy guarantees via security techniques such as secure multiparty computation~\cite{esorics/BogdanovLW08},  differential privacy~\cite{tamc/Dwork08}, and homomorphic encryption~\cite{rivest1978data}.
Thus, several privacy-preserving linear regression approaches~\cite{tdp/Dankar15,popets/GasconSB0DZE17,asiaccs/LuD21, nips/BernsteinS19} have been proposed under the federated learning framework.
As an optimization method, the Gradient Descent method is widely applied in multiparty linear regression~\cite{sp/MelisSCS19,isci/WangZLZL21}, and multiparty ridge regression with $\mathcal{L}_2$ regularization~\cite{isci/ChenRT18,isci/WangZLZL21}.

Unfortunately, the original gradient descent method fails to work with least-absolute shrinkage and selection operator (lasso) regression, whose optimization objective includes $\mathcal{L}_1$ regularization.
Lasso has demonstrated powerful performance on many tasks, e.g., genomics~\cite{d2009combining} and cancer research~\cite{kidd2018survival}, especially for selecting under-determined but sparse features.
To achieve multiparty lasso regression, ~\cite{icassp/BazerqueMG10} developed a distributed algorithm by reformulating the objective of the lasso into a separable form, which can be minimized by the Alternating-Direction Method of Multipliers (ADMM) method.
Zhao et al.~\cite{nips/ZhaoZLL18} proposed pSCOPE, a method for distributed sparse learning with $\mathcal{L}_1$ regularization.
However, these studies mainly focus on how to perform distributed regressions efficiently, and haven't provided sufficient security guarantees.
Zheng et al.~\cite{sp/ZhengPGS19} devised Helen, a system which uses partially homomorphic encryption and zero-knowledge proof to ensure secure ADMM implementation among multiple parties. 
Proximal Gradient Descent (PGD) based algorithm is also proposed for  secure multiparty lasso regression approach~\cite{midm/EgmondSGIVKSRLK21}, and is applied in two-party scenario.

Although GD, ADMM and PGD based multiparty linear regressions~\cite{sp/ZhengPGS19,midm/EgmondSGIVKSRLK21} have been proposed, a more general and effective optimization method, namely Coordinate Descent, has not been extensively studied under multiparty scenarios.
Coordinate descent~\cite{wu2008coordinate} is a very straightforward method in convex optimization that can be surprisingly efficient and scalable~\cite{cd2015}.
In this paper, we fill the gap to devise a federated coordinate descent approach, which aims to achieve privacy-preserving linear regression via collaboration between cloud servers and local data owners.
We hope regressions with different regularizations can be addressed, such as lasso regression.

However, it is a non-trivial task to develop a federated coordinate descent scheme for three challenges.
The first challenge is how to ensure that local private information wouldn't be leaked to other parties.
Second, since the the regression weights are vital and valuable assets belonging to local agents, they are reluctant to leak global model weights to any third parties except local data owners.
Thus, how to prevent cloud server(s) from stealing actual model weights in the learning process is another major challenge.
Third, we should ensure that the learned global models under multiparty scenarios won't be  affected by security mechanisms (e.g., encryption algorithms, noise perturbations, etc.). Namely, they can achieve good performance as accurately as centralized approaches. 

In this paper, we propose a privacy-preserving and secure multiparty scheme called Federated Coordinate Descent (FCD), to simultaneously estimate global models weights and preserve local data privacy.
To address the aforementioned challenges, our FCD scheme takes two main security mechanisms: 
(1) Data are pre-processed locally into some intermediate quantities, which are also encrypted by homomorphic plus operations and then shared to cloud servers for model training, without exposing sensitive local data to other parties; 
(2) Two noise perturbation techniques are devised to selected essential intermediate quantities for protecting the actual cloud-trained global weights from leakage to third parties, i.e., cloud evaluator and cryptographic service provider.
We guarantee that the noise perturbations can be eliminated when inaccurate weights returned to local data owners, and finally they can derive global model weights with high accuracy performance.
In addition, we successfully apply the FCD scheme to three regression models: linear regression, ridge regression, and lasso regression with $L_1$ regularization.
Finally, to evaluate the performance of our proposed FCD scheme in regression tasks, we take extensive studies in experiments on several UCI and synthetic datasets. 
The experimental results demonstrate that our FCD scheme has good performance, both in efficiency and effectiveness.
The main contributions achieved are as follows.
\begin{itemize}
    \item To the best of our knowledge, we are the first to propose a federated coordinate descent scheme for linear regression, ridge regression, and especially for lasso regression to address the underivative of $L_1$ regularization.
    
    \item We design specific secure aggregation and perturbation techniques to simultaneously solve security issues in the multiparty regressions for both local data and global model weights.

    \item We theoretically prove the security and usability of the proposed FCD scheme, i.e., it achieves high regression performance while preserving privacy from any other third parties.
    Experimental results show good performance of our scheme both on effectiveness and efficiency.
    
\end{itemize}

The rest of this paper is organized as follows. Section \ref{related} briefly introduces related work. Section \ref{pre} reviews the coordinate descent method along with linear, ridge, lasso regression, and Paillier encryption as preliminaries. Section \ref{overview} proposes the outline and key solution of the FCD scheme. Section \ref{detail} introduces the content of the FCD scheme in detail based on Section \ref{overview}, and carries out some  security analyses. Section \ref{experiment} reports the experimental results of the performance and security evaluation of the FCD scheme. Section \ref{conlusion} concludes the paper.

\section{Related Work}
\label{related}
In this section, we give a brief review of related work on: 
(1) distributed computing on optimization approaches;
and (2) privacy preservation for multiparty  linear regressions.

\subsection{Distributed Computing on Optimization Approaches}
Most machine learning models have formed an objective function to determine the best model parameters. 
To solve objective functions, several optimization approaches, including both convex and non-convex optimizations, have been adopted such as Gradient Descent~\cite{nips/BottouB07}, ADMM~\cite{mp/EcksteinB92}, Coordinate Descent~\cite{wu2008coordinate}, RMSprop~\cite{icml/MukkamalaH17}, Adagrad~\cite{icml/MukkamalaH17}, \\Adam~\cite{corr/KingmaB14}, Newton and quasi-Newton method~\cite{dennis1977quasi}, etc.
To run these optimizations on multiple machines or parties, existing distributed computing approaches to optimizations can be classified into two categories for the learning purpose: parallel learning and federated learning.

Parallel learning aims to perform high-throughput algorithm for optimization approaches on parallel computing clusters. 
The most critical research on parallel learning is to parallelize Gradient Descent~\cite{nips/ZinkevichWSL10,nips/TengW18,icml/XieKG19,aaai/GeorgeG20} for its widespread adoptions on machine learning tasks.
Based on these researches, many high-level machine learning models can be transferred to the distributed learning paradigms, for example, distributed ridge regression~\cite{icml/ShengD20}, distributed multi-task learning~\cite{aaai/YangL20a}, distributed tensor decomposition~\cite{www/HeHH19}, and distributed deep learning~\cite{icml/AssranLBR19}.
However, parallel learning assumes that data is centralized in one party, which has multiple computing devices.
Nowadays, in many areas, data is usually collected by multiparty, who are unwilling to share their data with a centralized party for privacy concerns.
Directly applying parallel learning to multiparties as computing clusters is intractable for the possible privacy leakage of sensitive data stored in each party. 

To address the privacy issues of multiparty machine learning, the concept of federated learning~\cite{tist/YangLCT19} is proposed to provide strong privacy guarantees via many security techniques, such as secure multiparty computation~\cite{esorics/BogdanovLW08}, differential privacy~\cite{tamc/Dwork08}, and homomorphic encryption~\cite{rivest1978data}.
In recent years, there is an extensive literature on federated learning with different optimization problems.
One of the popular studies is to solve the distributed privacy-preserving Gradient Descent~\cite{dais/DannerJ15,nss/Phong17,scn/DannerBHJ18}.
Jayaram et al.~\cite{IEEEcloud/JayaramVVTS20} designed a secure federated Gradient Descent technique to train machine learning or deep learning models collaboratively.
Wu et al.~\cite{tsp/WuLCG20} proposed a federated Stochastic Gradient Descent-based optimization for learning over multiparties under malicious Byzantine attack assumption.
Tan et al.~\cite{scn/TanZHG21} proposed distributed privacy-preserving Gradient Descent schemes on horizontally and vertically partitioned data.
ADMM is another kind of thriving optimization method, which has been studied for distributed and private learning in many literatures~\cite{ccs/ZhangZ16,tifs/ZhangZ17,icml/ZhangKL18,bigdataconf/DingZCXZP19,tifs/HuangHGCG20,tifs/ZhangKL20}. 

Coordinate descent is also a very straightforward and significant method in convex optimization.
To the best of our knowledge, most distributed Coordinate Descent approaches are designed for communication efficient parallel computing~\cite{cdc/Necoara12a,amcc/NecoaraF15,jmlr/RichtarikT16,jmlr/MahajanKS17,pakdd/ZhaoZZLC20}. 
Undoubtedly, there still exists a substantial lack of studying privacy-preserving Coordinate Descent in multiparty scenarios.
In this paper, we focus on federated multiparty Coordinate Descent, which addresses the possible privacy and security issues.

\subsection{Privacy Preservation for Multiparty Linear Regression}
Linear regression is a fundamental model in machine learning and has many applications. 
Developing distributed and privacy-preserving linear regression approaches has recently received significant attention.

Until now, many efforts on distributed linear regression are to combine multiple parties collaboratively to learn a global model by sharing a subset of the parameters~\cite{tit/YuanPS21,tac/YangZYS21,jmlr/DobribanS20,aaai/HuWW20}. 
However, sharing local parameters may also leak sensitive information~\cite{ccs/HitajAP17,sp/MelisSCS19}.
As a result, several privacy-preserving linear regression approaches have been proposed under the federated learning framework~\cite{tdp/Dankar15,popets/GasconSB0DZE17,asiaccs/LuD21, nips/BernsteinS19}.
To solve the optimization problem of linear regression, most of them are based on the federated Gradient Descent method~\cite{sp/MelisSCS19,isci/WangZLZL21,isci/ChenRT18}. 
Indeed, federated Gradient Descent is applicable for ordinary multiparty linear regression without any regularizations~\cite{sp/MelisSCS19,isci/WangZLZL21}, or multiparty ridge regression with $\mathcal{L}_2$ regularization~\cite{isci/ChenRT18,isci/WangZLZL21}.

However, the original Gradient Descent method is limited to Lasso regression, which has a crucial role in many domains such as cancer research~\cite{kidd2018survival}, Alzheimer's disease modeling~\cite{tkdd/LiuCGZB18}, and gene expression~\cite{cmmm/KanekoHH15}.
This is because the optimization objective function of lasso regression includes both $\mathcal{L}_1$ and $\mathcal{L}_2$ regularizations.
To achieve privacy-preserving multiparty lasso regression, Bazerque et al.~\cite{icassp/BazerqueMG10} developed a distributed algorithm by reformulating the objective of lasso into a separable form, which can be minimized by ADMM.
Zhao et al.~\cite{nips/ZhaoZLL18} proposed pSCOPE, a method for distributed sparse learning with $\mathcal{L}_1$ regularization.
However, these studies haven't provided sufficient security guarantees and mainly focus on distributed learning for efficiency.
Zheng et al.~\cite{sp/ZhengPGS19} devised Helen, a system that uses partially homomorphic encryption and zero-knowledge proof to guarantee secure ADMM implementation. 
But the computation cost of zero-knowledge proof is very high.
\\ Egmond et al.~\cite{midm/EgmondSGIVKSRLK21} proposed a secure multiparty Lasso regression approach with the Proximal Gradient Descent algorithm, but it is applied only in the two-party scenario.

Coordinate descent~\cite{wu2008coordinate} is a general method for solving lasso regression with $\mathcal{L}_1$ regularization.
But to the best our knowledge, it has not been explored extensively for privacy-preserving linear regression and lasso regression in multiparty scenarios.
Kesteren et al.~\cite{corr/abs-1911-03183} proposed a privacy-preserving protocol for generalized linear models using distributed Block Coordinate Descent, sharing a set of parameters.
However, it is designed only for vertically partitioned data and provides few security techniques for privacy guarantee.
In this paper, we draw on the theory of the Federated Coordinate Descent approach, in which data is horizontally partitioned and distributed among multiparties.
We aim to achieve privacy-preserving multiparty regression, including linear, ridge, and lasso regressions.

\section{Preliminary}
\label{pre}
In this section, we briefly illustrate the relevant preliminaries required for this paper, including the coordinate descent method, linear regression and paillier homomorphic encryption algorithm.
\subsection{Coordinate Descent}
Coordinate Descent is a classical optimization algorithm. The main idea of Coordinate Descent is to minimize only along one axis direction in each iteration, while the values in other axes are fixed, and thus making the multivariate optimization problem into a univariate optimization problem. 
Unlike the gradient descent method, it searches along a single-dimensional direction each time. 
After a number of iterations in which a minimum value in the current dimension is obtained, it finally converges to the optimal solution. Given the objective function $f(\textbf{w})$ where $\textbf{w}=(w_{0}, w_{1}, \ldots, $ $w_{n}) \in \mathbb{R}^{n+1}$ is a vector, the coordinate descent method is performed by
\begin{eqnarray}\label{equ:cd}
\begin{aligned}
&w_{0}^{(t)}=\argmin _{w_{0}}\ f\left(w_{0}, w_{1}^{(t-1)}, w_{2}^{(t-1)}, \ldots w_{n}^{(t-1)}\right), \\
&w_{1}^{(t)}=\argmin _{w_{1}}\ f\left(w_{0}^{(\mathrm{t})}, w_{1}, w_{2}^{(t-1)}, \ldots w_{n}^{(t-1)}\right), \\
&\ldots \ldots \\
&w_{n}^{(t)}=\argmin _{w_{n}}\ f\left(w_{0}^{(\mathrm{t})}, w_{1}^{(\mathrm{t})}, w_{2}^{(t)}, \ldots w_{n-1}^{(t)}, w_{n}\right),
\end{aligned}
\end{eqnarray}
where $t$ indicates the current iteration.
Note that the search of the minimum value in each dimension of $\textbf{w}$ during the iterations $(t)$ is conducted by the results of the previous iteration $(t-1)$ until  convergence.

\subsection{Linear Regression}
Given a dataset $D=\{(\textbf{x}_1,y_1),(\textbf{x}_2,y_2),\cdots,(\textbf{x}_m,y_i)\}$ with $m$ samples, where $\textbf{x}_i=(x_{i1}, x_{i2}, \ldots, x_{in})$ is a $n$-dimensional vector and $y_i$ is the corresponding target value, 
the aim of linear regression is to learn a vector of optimal regression coefficients $\textbf{w}=\{w_0,w_1,\cdots,w_n\}$ to fit the dataset $D$. 
The hypothesis function of linear regression model is $h_{\textbf{w}}(\textbf{x}_i)=\textbf{w}\cdot \textbf{x}_i$, where we rewrite the input vector as $\textbf{x}_i=(x_{i0},x_{i1}, x_{i2}, \ldots,$ $ x_{in})$ to match the dimension of $\textbf{w}$ since $x_{i0}=1$.
The sum of squared error is used as the cost function to obtain the optimal solution, i.e.,
$$f(\textbf{w})=\sum_{i=1}^{m}\left(y_{i}-\textbf{w}\cdot \textbf{x}_i\right)^{2}=\sum_{i=1}^{m}\left(y_i-\sum_{j=0}^{n} x_{ij} w_{j}\right)^{2}
$$.

Instead of gradient descent to optimize the cost function, we adopt the coordinate descent method to find the optimal $\textbf{w}$ by \eqref{equ:cd}.
To solve the $\argmin$ objective function,  
the partial derivative of $f(\textbf{w})$ over $w_k$ is computed by
$$
\frac{\partial f(\textbf{w})}{\partial w_{k}}=-2 P_{k}+2 Z_{k} w_{k}, \quad k=0,\cdots, n,
$$
where 
\begin{equation}\label{equ:pk}
P_{k}=\sum_{i=1}^{m} x_{i k}\left(y_{i}-\sum_{j=0, j \neq k}^{n} x_{i j} w_{j}\right),
\end{equation}
and
\begin{equation}\label{equ:Zk}
Z_{k}=\sum_{i=1}^{m} x_{i k}^{2}.
\end{equation}
Let $\frac{\partial f(\textbf{w})}{\partial w_{k}}=0$, and we get 
\begin{equation}\label{equ:wk1}
w^\ast_{k}=\argmin_{w_k} f(\textbf{w})=P_{k} / Z_{k}, \quad k=0,\cdots, n.
\end{equation}

By running \eqref{equ:cd} using solution \eqref{equ:wk1} iteratively until convergence, we finally derive the optimal regression weights.

\subsection{Ridge Regression}
Ridge regression is one of the most significant transformations of linear regression.
Compared to linear regression, ridge regression regularizes the weights using a quadratic penalty to avoid the problem of possible overfitting, and thus improves prediction accuracy.
By introducing an $L_{2}$ regularization term, the cost function of ridge regression is
$$
f(\textbf{w})=\sum_{i=1}^{m}\left(y_{i}-\textbf{w}\cdot \textbf{x}_i\right)^{2}+\lambda \sum_{j=0}^{n} w_{j}^{2},
$$
where $\lambda$ is a parameter controlling the regularization effect.
To perform coordinate descent by \eqref{equ:cd}, we compute the partial derivatives as
$$
\frac{\partial f(\textbf{w})}{\partial w_{k}}=-2 P_{k}+2 Z_{k} w_{k}+2\lambda w_{k}
$$
where $P_k$ and $Z_k$ are defined by \eqref{equ:pk} and \eqref{equ:Zk}.
Let $\frac{\partial f(\textbf{w})}{\partial w_{k}}=0$, and we get
\begin{equation}\label{equ:wk2}
w^\ast_{k}=\frac{P_{k}}{Z_{k}+\lambda}, \quad k=0,\cdots, n.
\end{equation}
The optimal regression weights can also be solved by the coordinate descent method combining \eqref{equ:cd} and \eqref{equ:wk2}.

\subsection{Lasso Regression} 
Lasso regression is a popular model for parameter learning and variable selection in linear regression problems.
It is well-suited for the sparse scenario where most weights in $\textbf{w}$ shrink to zero.
Different from ridge regression, lasso uses the penalty of $L_{1}$ regularization, which is defined as the sum of absolute values of weights. 
The cost function of the lasso is formalized as
$$
f(\textbf{w})=\sum_{i=1}^{m}\left(y_{i}-\textbf{w}\cdot \textbf{x}_i\right)^{2}+\lambda \sum_{j=0}^{n}\left|w_{j}\right|.
$$
Since the $L_1$ regularization term of $f(\textbf{w})$ is non-differentiable, we use the sub-gradient instead of the gradient to get
$$
\begin{aligned}
\frac{\partial f(\textbf{w})}{\partial w_{k}} &=\left\{\begin{array}{cc}
2 Z_{k} w_{k}-2 P_{k}-\lambda & w_{k}<0 \\
{\left[-2 P_{k}-\lambda,-2 P_{k}+\lambda\right]} & w_{k}=0 \\
2 Z_{k} w_{k}-2 P_{k}+\lambda & w_{k}>0
\end{array}\right..
\end{aligned}
$$
Let $\frac{\partial f(\textbf{w})}{\partial w_{k}}=0$, and we get
\begin{equation}\label{equ:wk3}
	\begin{aligned}
w^\ast_{k}= \begin{cases}\left(P_{k}+\lambda / 2\right) / Z_{k} & P_{k}<-\lambda / 2 \\ 0 & -\lambda / 2 \leq P_{k} \leq \lambda / 2 \\ \left(P_{k}-\lambda / 2\right) / Z_{k} & P_{k}>\lambda / 2\end{cases}
	\end{aligned}
\end{equation}
Finally, the optimal regression weights are obtained by performing the coordinate descent method via \eqref{equ:cd} and \eqref{equ:wk3}.

\subsection{Paillier Encryption}
\label{paillier}
Paillier encryption is a probabilistic public key encryption algorithm proposed by Pascal Paillier~\cite{eurocrypt/Paillier99} in 1999, which is based on the difficult problem of compound residual class.
It is a widespread realization of homomorphic encryption satisfying homomorphic plus operation and homomorphic scalar multiplication, and has been widely used in cryptographic signal processing and federated learning.
The main functions of the Paillier cryptosystem are as follows.

\noindent\textbf{Key Generation:}

(1) First, select two large prime numbers $p$ and $q$ of the same length randomly, and make them satisfy $\operatorname{gcd}(p q,(p-1)(q-1))=1$, where $\operatorname{gcd}()$ means the greatest common divisor.

(2) Compute $N=p q$ and $\lambda=\operatorname{lcm}(p-1, q-1)$, where $\operatorname{lcm}()$ denotes the minimum common multiple and $|N|$ denotes the bit length of $N$.

(3) Randomly selects an integer $g$ from the integer set $\mathbb{Z}_{N^{2}}^{*}$: $g \leftarrow \mathbb{Z}_{N^{2}}^{*}$.

(4) Define the function $L$: $L(x)=\frac{x-1}{N}$, simultaneously calculate $\mu=\left(L\left(g^{\lambda} \bmod N^{2}\right)\right)^{-1} \bmod N$, and define the public key as $(N, g)$ and private key as $(\lambda, \mu)$.

\noindent\textbf{Homomorphic plus:} For ciphertext $c_{1}=E[m_1]$ and $c_{2}=E[m_2]$, we define the ciphertext homomorphic plus $'\oplus'$ operation as $c_{1} \oplus c_{2} =E[m_1]\oplus E[m_2]=E[m_1+m_2]$.

\noindent\textbf{Homomorphic scalar multiplication:} For ciphertext $c_{1}=E[m_1]$ and scalar $\alpha$, we define the ciphertext homomorphic multiplication $'\otimes'$ operation as $\alpha \otimes c_1=\alpha \otimes E\left(m_1\right)=E\left(\alpha \cdot m_1\right)$.

\section{Scheme Overview}
\label{overview}
In this section, we first introduce the architecture and design goals of our multiparty linear regression system.
Then, we present the threat model, the key ideas, and the security goals of our FCD scheme.

\subsection{Multiparty Linear Regression System}
\label{sec:system}
In this paper, we build a multiparty linear regression system.
Given a dataset $D=\{(\textbf{x}_1,y_1),(\textbf{x}_2,y_2),\cdots,(\textbf{x}_m,y_m)\}$ with $m$ samples, where $\textbf{x}_i=(x_{i1}, x_{i2}, \ldots, x_{in})\in \mathbb{R}^n$  and $y_i$ is the corresponding target value. Suppose the samples are distributed in $M$ parties and the number of samples in party $l$ is $m_l$.
Thus, we get $\sum_{l=1}^{M}\sum_{i=1}^{m_l} = m$.
The goal of these parties is to collaboratively learn a linear regression model $h_{\textbf{w}}(\textbf{x}_i)=\textbf{w}\cdot \textbf{x}_i$, such that the samples in $D$ are best fitted.

To meet the privacy requirements, 
our system mainly consists of three components: local data owner (\textit{DO}), cloud server for global training and evaluation (\textit{Evaluator}), and cryptographic service provider (\textit{CSP}), as shown in Fig. \ref{fig:arch}.

\begin{figure}[t]
\centering
\includegraphics[width=0.5\textwidth]{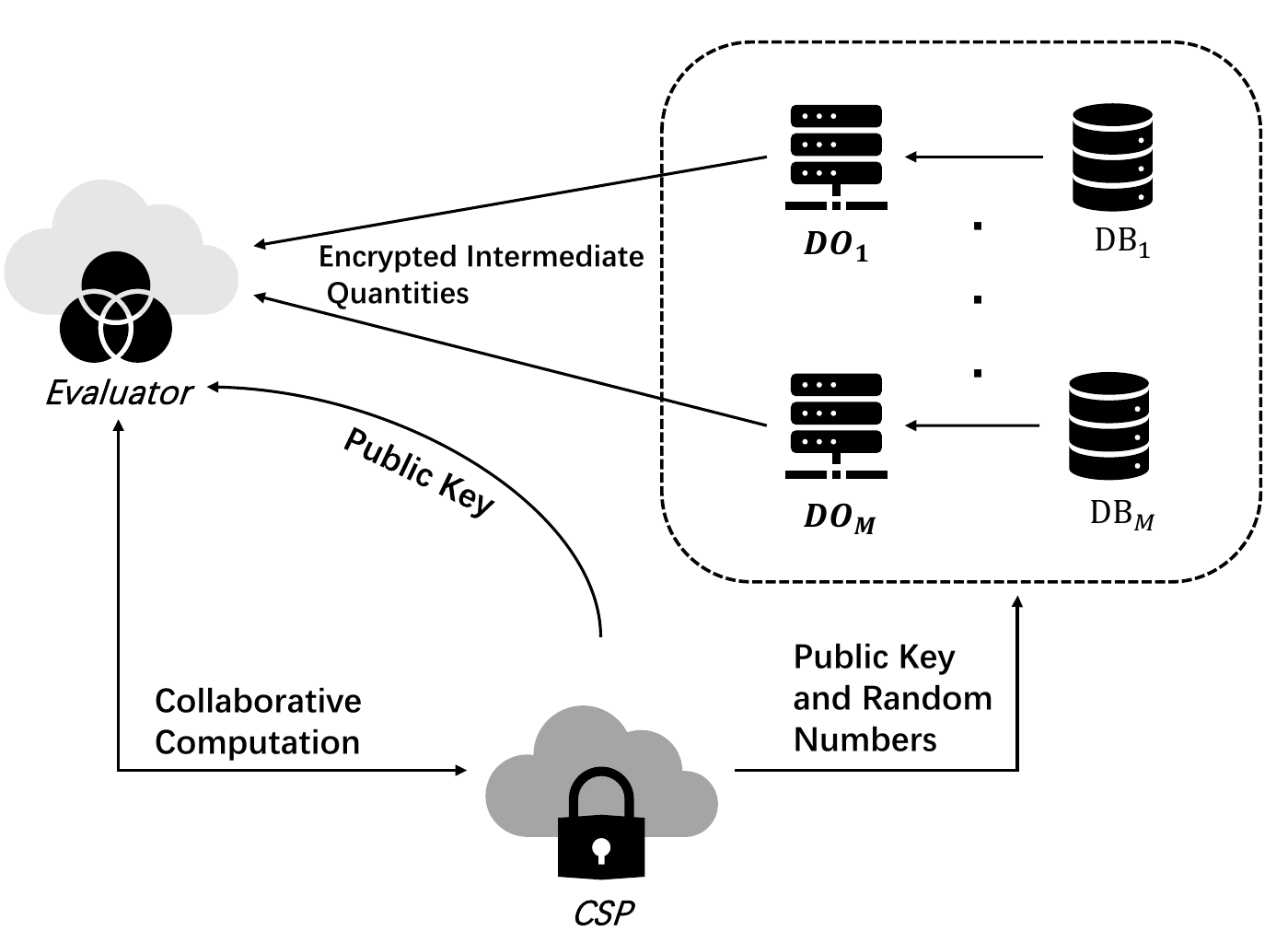}
\caption{Architecture of Multiparty Linear Regression System}
	\label{fig:arch}
\end{figure}

\textit{DO} is the owner of the local data, and we use ${DO}_l$ to denote the $l$-th local data owner $(l=1 \cdots M)$.
It is responsible for ${DO}_l$ to load and pre-process data from the local database (${DB}_l$).
To prevent information leakage, ${DO}_l$ encrypts the some intermediate quantities with the public key obtained from the \textit{CSP}, and sends them to the \textit{Evaluator} for global training.

\textit{CSP} is a cryptographic service provider that generates the key and the random number at system startup.
As shown in Fig. \ref{fig:arch}, the \textit{CSP} keeps the generated private keys and distributes the public keys or random numbers to the \textit{Evaluator} and each \textit{DO}.
During the aggregation and evaluation phase, it is responsive for the \textit{CSP} to decrypt ciphertext and do some auxiliary computation operations.

\textit{Evaluator} is the cloud server for global  training based on all encrypted intermediate quantities. 
To perform the coordinate descent algorithm on them, \textit{Evaluator} combines the  \textit{CSP} to finish the training process collaboratively.

The goals of our system are shown as follows.
\begin{enumerate}
    \item \textit{DO}, \textit{CSP}, and \textit{Evaluator} work together, and eventually, all \textit{DO}s obtain the same global regression model weights with high performance.
    \item \textit{DO}'s data and sensitive local information would not leak to other parties, i.e., \textit{CSP} and \textit{Evaluator}.
    \item Neither \textit{CSP} nor \textit{Evaluator} can identify accurate weights of the global regression model, because the weights are also private for \textit{DO}s.
\end{enumerate}

\subsection{Threat Model}
In our multiparty linear regression system, we consider \textit{DO}, \textit{CSP}, and \textit{Evaluator} to be semi-honest-but-curious.
Specifically, \textit{DO} honestly performs local computation and encryption, but it is curious about other \textit{DO}s' local data privacy.
\textit{CSP} and \textit{Evaluator} perform calculations and interactions strictly according to our training  scheme, but they are also curious about the local data information of \textit{DO}s and trying to infer any information about model parameters. 

We also assume that there is no collusion between \textit{Evaluator} and \textit{CSP}.
This is because once \textit{CSP} colludes with \textit{Evaluator}, the private key and sensitive information of the global regression weights would be leaked. 

\subsection{Key Idea of Our FCD Scheme}
\label{main idea}

We aim to perform the coordinate descent method (see Eq. \eqref{equ:cd}) in a federated way. 
To achieve the security goal under our threat model, the basic principle of our scheme is to send encrypts of some intermediate quantities from each \textit{DO}. 
Then, the \textit{Evaluator} can get the specific aggregated intermediate quantities with the help of \textit{CSP}, and continue to finish training on these quantities.

From Eq. \eqref{equ:wk1}, \eqref{equ:wk2}, \eqref{equ:wk3}, we can see that the key to perform the coordinate descent method for \textit{Evaluator} is to compute $P_k$ and $Z_k$ in a secure manner, because these two quantities need to collect data from each \textit{DO}.
We transform $P_k$ from Eq. \eqref{equ:pk} to the following: 
\begin{equation}\label{equ:pk1}
	\begin{aligned}
P_{k}=Q_{k}-\sum_{j=0}^{n} S_{k j} w_{j},
\end{aligned}
\end{equation}
where $Q_k=\sum_{l=1}^{M} q^{(l)}_k$ and $S_{kj}=\sum_{l=1}^{M} s^{(l)}_{kj}$ are aggregation of $q^{(l)}_k$ and $s^{(l)}_{kj}$ from all \textit{DO}s. 
For the $l$-th \textit{DO}, $q^{(l)}_k$ and $s^{(l)}_{kj}$ are defined as follows:

\begin{equation}\label{equ:qk}
\begin{aligned}
q^{(l)}_{k}=\sum_{i=1}^{m_l} x_{i k} y_{i},
\end{aligned}
\end{equation}
and
\begin{equation}\label{equ:skj}
\begin{aligned}
&s^{(l)}_{k j}=\left\{\begin{array}{cl}
0 & k=j \\
\sum_{i=1}^{m_l} x_{i j} x_{i k} & k \neq j.
\end{array}\right.
\end{aligned}
\end{equation}
Similarly, for $Z_{k}$ in Eq. \eqref{equ:Zk}, we can also transform it into $Z_k=\sum_{l=1}^{M} z^{(l)}_k$ by the definition of 
\begin{equation}\label{equ:zk}
\begin{aligned}
z^{(l)}_{k}=\sum_{i=1}^{m_l} x^2_{i k}.
\end{aligned}
\end{equation}

Therefore, we can clearly see that  \textit{DO} does not need to upload local sensitive data for global model training, but computes $q^{(l)}_{k}$, $s^{(l)}_{kj}$ and $z^{(l)}_{k}$ locally.
To enhance the strength of data security, 
we use homomorphic encryption on these quantities and then upload them to the cloud \textit{Evaluator}. 
Then, the \textit{Evaluator} performs  homomorphic plus operation under the ciphertext, and obtains the global aggregation quantities $P_k$ and $Z_k$ from \textit{CSP} via decryption operation.
Finally, the \textit{Evaluator} runs coordinate descent algorithm to learn the global weights $\bm{w}$.
In this process, due to the mechanism of homomorphic encryption, the \textit{Evaluator} only gets a set of  encrypted and aggregated intermediate quantities, which cannot be inferred any sensitive information about the local data.

\begin{figure*}[t]
	\centering
	\includegraphics[width=\textwidth]{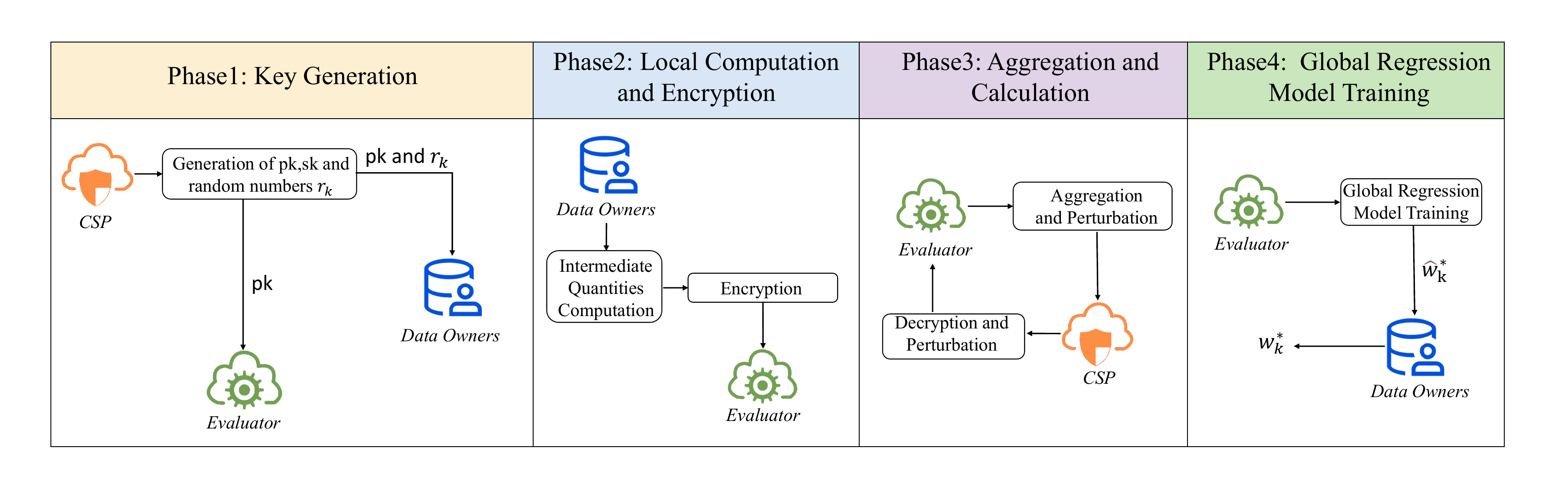}
	\caption{An overview of FCD Schemes.
	After the key generation phase, \textit{DO} performs local computation and encryption, and then sends encrypted intermediate quantities to the \textit{Evaluator} for aggregation.
	The \textit{Evaluator} completes the aggregations and sends them to the \textit{CSP} for decryption, and then returns the intermediate quantities to the \textit{Evaluator}, which performs the final regression model training.
	Eventually, only \textit{DO} gets the actual regression model weights.}
	\label{fig:phases}
\end{figure*}

\subsection{Protection of the Global Weights}
\label{security protect}
While the local private data is protected  via homomorphic encryption and aggregation, it is also essential to protect the global model weights that may be learned by \textit{Evaluator} and \textit{CSP}.

Therefore, we consider adding two perturbations in the training process of our FCD scheme.
Firstly, in each iteration $t$ of coordinate descent, we use a small noise $r_k$ to ensure that the weight $w_k$ will not be leaked to the \textit{Evaluator}. Our objective is:
\begin{equation}\label{equ:sc9}
	\begin{aligned}
    \widehat{w}_{k}=w_{k}+r_{k}, \qquad k=0,\cdots,n,
    \end{aligned}
\end{equation}
where $\widehat{w}_{k}$ is the weight the \textit{Evaluator} can derive.
Note that $r_k$ is added on an intermediate term when \textit{CSP} returns decrypted quantities to the \textit{Evaluator}.
We demonstrate the detailed information in \ref{cloud decrypt}.
We guarantee that during the aggregation and training phases,  \textit{Evaluator} can only derive $\widehat{w}_{k}$, which is not the actual model weight.
But \textit{DO}s can get the actual weight by $w_{k}=\widehat{w}_{k}-r_{k}$.

Second, when \textit{CSP} decrypts the intermediate quantities such as $P_k$ and $Z_k$, it can also infer the actual model weights by performing the coordinate descent method via \eqref{equ:cd} and \eqref{equ:wk3}.
To prevent weights leakage on \textit{CSP}, \textit{Evaluator} generates a set of random numbers ${\xi}_{k j}, k,j=0,\cdots n$. Then, it executes the paillier homomorphic scalar multiplication algorithm on ${\xi}_{k j}$ and the aggregated encryption $E[{S}_{k j}]$.
Since \textit{CSP} couldn't get the actual intermediate quantities, it would fail to infer the actual model weights. 
The detailed information is introduced in \ref{Addition noise in Eva} and \ref{model training}.

\section{Scheme Details}
\label{detail}
 In this section, we briefly illustrate the main phases of our FCD scheme. 
 Next, we describe the details of each phase and provide proof for the correctness of the scheme.

\subsection{Phases of the FCD Scheme}
The training process of the proposed FCD scheme is shown in Fig. \ref{fig:phases}. 
Our scheme consists of four phases: key generation, local computation and encryption, aggregation and calculation, and regression model training.
We give brief introductions as follows.

\begin{enumerate}
\item \textbf{Key Generation Phase}: \textit{CSP} generates keys and a set of random numbers $r_k (k=0,\cdots,n)$. 
Then, the public key is distributed to the \textit{Evaluator} and all \textit{DO}s. 
The random numbers are also distributed to each \textit{DO}.

\item \textbf{Local Computation and Encryption Phase}: Each \textit{DO} executes the computation and encryption algorithm on the local private data, and then sends the encrypted intermediate quantities to the \textit{Evaluator}.

\item \textbf{Aggregation and Calculation Phase}: The \textit{Evaluator} aggregates the received intermediate quantities under 
ciphertext from each \textit{DO}. Before sending them to the \textit{CSP} for decryption, the \textit{Evaluator} adds noises into the aggregations to avoid weights inference on the \textit{CSP}.
Then it sends perturbated aggregations to the \textit{CSP}.
Finally, the \textit{CSP} decrypts the aggregations and sends the results back to the \textit{Evaluator}.

\item \textbf{Global Regression Model Training Phase}: The \textit{Evaluator} uses the decrypted aggregations sent back from the \textit{CSP} to train the global regression model, and distributes the model weights with noise back to all \textit{DO}s.
Finally, \textit{DO} gets the final global model weights via a simple noise elimination.

\end{enumerate}

 \subsection{Key Generation Phase}
 Initially, \textit{CSP} executes the key generation algorithm as: $\text {KeyGen}(K) \rightarrow \text {( pk , sk )}$, where $K$ is the key length, pk and sk are the generated public key and private key,  respectively.
 Then \textit{CSP} distributes pk to the \textit{Evaluator} and all \textit{DO}s. 
 According to the number of features, the \textit{CSP} generates $n+1$ random numbers to form a vector $\bm{r}=(r_0,r_1,\cdots,r_n)$, and sends $\bm{r}$ to all \textit{DO}s.

\subsection{Local Computation and Encryption Phase}
\subsubsection{Intermediate Quantities Computation}
According to Eq. \eqref{equ:qk}, \eqref{equ:skj}, \eqref{equ:zk}, for the $l$-th \textit{DO} , $q^{(l)}_{k}$, $s^{(l)}_{kj}$, $z^{(l)}_k$ are the quantities to be locally calculated.
Besides those quantities, another critical quantity needs to be calculated before encryption.

To protect the global model weights, the initial weight $w_k$ is added by the noise number $r_k$ to be a perturbated weight $ \widehat{w}_k$ as shown in Eq. \eqref{equ:sc9}.
Substitute $ \widehat{w}_k$ into Eq. \eqref{equ:pk1}, we can get

\begin{equation}\label{equ:pk2}   
P_k=Q_{k}-\sum_{j=0}^{n} S_{k j} w_{j}-\Delta R_k, 
\end{equation}
where 
$$
	\Delta R_k = \sum_{j=0}^{n} S_{k j} r_j = \sum_{j=0}^{n} \sum_{l=1}^{M} s^{(l)}_{kj} r_j =  \sum_{l=1}^{M} \sum_{j=0}^{n} s^{(l)}_{kj} r_j.
$$
We denote 
\begin{equation}\label{delta r} 
	{\Delta r}^{(l)}_k=\sum_{j=0}^{n} s^{(l)}_{kj} r_j,  
\end{equation}
and then $\Delta R_k = \sum_{l=1}^{M} {\Delta r}^{(l)}_k$, which can be factorized and calculated by each \textit{DO}.

Hence ${\Delta r}^{(l)}_k\ (k=0,\cdots,n)$ are also quantities to be calculated for each \textit{DO}. 
Note that if the \textit{Evaluator} uses $P_k$ in Eq .\eqref{equ:pk2} to perform the coordinate descent algorithm in \eqref{equ:cd}, it is straightforward to see that the \textit{Evaluator} will finally derive $\widehat{w}_k$ instead of $w_k$.
This is one of our objectives to protect model weights.

\subsubsection{Encryption of Intermediate Quantities}
After finishing the computation of ${q^{(l)}_k}$, ${s^{(l)}_{k j}}$, ${z^{(l)}_k}$, ${{\Delta r}^{(l)}_k}$, \textit{DO} executes paillier encryption on them using the public key pk. For each $k,j = 0,\cdots,n$, we have
\begin{equation}
\begin{aligned}
E[q^{(l)}_k] & \Leftarrow \text { Encrypt }\left(\text { pk, } q^{(l)}_k\right), \\
E[s^{(l)}_{k j}] & \Leftarrow \text { Encrypt }\left(\text { pk, } s^{(l)}_{k j}\right), \\
E[z^{(l)}_{k}] & \Leftarrow \text { Encrypt }\left(\text { pk, } z^{(l)}_k\right), \\
E[\Delta r^{(l)}_{k}] & \Leftarrow \text { Encrypt }\left(\text { pk, } \Delta r^{(l)}_{k}\right).
\end{aligned}
\end{equation}
Then, these encrypted quantities as well as $\widehat{w}_{k}$ are sent together to the \textit{Evaluator} for data aggregation and global model training.

\subsection{Aggregation and Calculation Phase}
\subsubsection{Data Aggregation in the \textit{Evaluator}}
After receiving the encrypted quantities from all \textit{DO}s, the \textit{Evaluator} executes  aggregation using the additive homomorphic operation `$\oplus$' of the paillier algorithm as:

\begin{equation}\label{equ:agg}
\begin{aligned}
E[{Q}_{k}]& = E[{q}^{(1)}_k] \oplus  \cdots \oplus E[{q}^{(M)}_k], \\
E[S_{k j}] & =E[s^{(l)}_{k j}]\oplus \cdots \oplus E[s^{(M)}_{k j}],\\
E[Z_{k}] & =E[z^{(l)}_{k}] \oplus \cdots \oplus E[z^{(M)}_{k}], \\
E[\Delta R_{k}] & =E[\Delta r^{(l)}_{k}]\oplus \cdots \oplus  E[\Delta r^{(M)}_{k}].
\end{aligned}
\end{equation}
To this end, the \textit{Evaluator} derives the encrypted and aggregated intermediate quantities, which help run the coordinate descent algorithm.
Next, the \textit{Evaluator} will send them to the \textit{CSP} for decryption.

\subsubsection{Perturbation in the \textit{Evaluator}}
\label{Addition noise in Eva}
However, directly transmitting the aggregated intermediate quantities will lead to model weight leakage in the \textit{CSP}.
\begin{definition}[Weight Stolen Attack 1]
	If $E[Q_k]$, $E[S_{k j}]$, $E[Z_{k}]$ and $E[\Delta R_{k}]$ are sent to the \textit{CSP},  
	it can calculate the actual value of $P_k$ in Eq. \eqref{equ:pk1}.
	Therefore, after generating a random set of initial weights $w_k, (k=0,\cdots,n)$, the \textit{CSP} can obtain the final model weights by executing the coordinate descent algorithm individually.	
\end{definition}

To prevent the model stolen attack in the \textit{CSP} , we devise to add perturbations on one of the encrypted and aggregated intermediate quantities, i.e., $E[S_{kj}]$.
The \textit{Evaluator} firstly generates a set of random number ${\xi}_{k j}, (k,j=0,\cdots,n)$, and then executes the paillier homomorphic scalar multiplication operations `$\otimes $' on ${\xi}_{k j}$ and $E[{S}_{k j}]$ as
\begin{equation}\label{equ:sc17}
E[S^{\prime}_{k j}]=E[S_{k j}] \otimes   \xi_{k j},\  \left(k,j = 0,\cdots,n\right), 
\end{equation}
where $S^{\prime}_{kj}$ is the noisy form of $S_{kj}$.
After completing the perturbations by \textit{Evaluator}, it sends $E[Q_{k}]$, $E[{S^{\prime}}_{k j}]$, $E[Z_{k}]$, $E[{\Delta R}_{k}]$ to the \textit{CSP} for decryption.

\subsubsection{Decryption and Perturbation in the \textit{CSP}}
\label{cloud decrypt}
When receiving all the aggregated encrypted quantities from the \textit{Evaluator}, the \textit{CSP} performs the paillier decryption algorithm via private key as:
$$
\begin{aligned}
Q_{k} & \Leftarrow \operatorname{Decrypt}\left(\text {sk, } E[Q_{k}]\right) ,\\
S_{k j}^{\prime} & \Leftarrow \operatorname{Decrypt}\left(\text { sk , } E[S^{\prime}_{k j}]\right) ,\\
Z_{k} & \Leftarrow \operatorname{Decrypt}\left(\text { sk, } E[Z_{k}]\right) ,\\
\Delta R_{k} & \Leftarrow \operatorname{Decrypt}\left(\text { sk, } E[\Delta R_{k}]\right).
\end{aligned}
$$
Note that the \textit{CSP} couldn't infer the actual model weights because of $S^{\prime}_{kj}$, and we give a proof in subsection~\ref{protection}.

After decryption, the \textit{CSP} cannot send these intermediate quantities back to the \textit{Evaluator} directly.
The reason is the same as mentioned in subsection \ref{Addition noise in Eva}, and we give the following definition.
\begin{definition}[Weight Stolen Attack 2]
	If the \textit{Evaluator} receives all the decrypted intermediate quantities, i.e., $Q_{k}$, ${S^{\prime}}_{k j}$, $Z_{k}$, ${\Delta R}_{k}$, it can also infer the global model weights.
\end{definition}

To prevent the model weight leakage, we devise to add perturbations to $Q_k$ and $\Delta R_k$ in the \textit{CSP} before sending them back to the \textit{Evaluator}.
For each $k,j = 0,\cdots,n$, we set
\begin{equation}\label{equ:qkrk}
	\begin{aligned}
		Q_{k}^{\prime}&=Q_{k}+2\ \Delta R_{k},\\
		\Delta R_{k}^{\prime}&=\Delta R_{k} - Z_kr_k.
	\end{aligned}
\end{equation}
Note that the noise $r_k$ is the same as the \textit{CSP} distributed to each \textit{DO}, 
and $r_k$ is unknown to the \textit{Evaluator}.

Finally, the \textit{CSP} sends the decrypted ${S^{\prime}}_{k j}$, ${Z}_k$, and the perturbated ${Q^{\prime}}_k$, ${\Delta R^{\prime}}_k$ to the \textit{Evaluator} for training the global regression model.
Let $Dec.Pai$ denote the decryption function. The decryption and perturbation processes are illustrated in algorithm~\ref{alg:dec}.

\begin{algorithm}[t]
	\label{alg:dec}
\caption{Decryption and perturbation in \textit{CSP}}  
\LinesNumbered  
\KwIn{ $E[{Q}_k]$, $E[S^{\prime}_{k j}]$, $E[{Z}_k]$, $E[{\Delta R}_k]$, ${r}_k$, sk }
\KwOut{ $Q^{\prime}_k$ , $S^{\prime}_{k j}$ , ${Z}_k$ , ${\Delta R}^{\prime}_k$ }  
\For{$k = 0,\cdots, n$}{
$Z_k \gets Dec.Pai(E[Z_k],\text{sk})$\newline
$Q_k \gets Dec.Pai(E[Q_k],\text{sk})$\newline
$\Delta R_k \gets Dec.Pai(E[\Delta R_k],\text{sk})$\newline
$Q^{\prime}_k \gets Q_k + 2{\Delta R}_k$\newline
$\Delta R^{\prime}_k \gets {\Delta R}_k-{Z_k}r_k$\newline
\For{$j = 0,\cdots, n$}{
      $S^{\prime}_{k j} \gets Dec.Pai(E[S^{\prime}_{k j}],\text{sk})$
      }
    }
\end{algorithm}  

\subsection{Global Regression Model Training Phase}
\label{model training}
After receiving the decrypted data $Q^{\prime}_k$, $S^{\prime}_{k j}$, ${Z}_k$, and $\Delta R^{\prime}_k$ from the \textit{CSP}, the \textit{Evaluator} firstly eliminates the added noise to derive $S_{kj}$ according to Eq.~\eqref{equ:sc17}.
Then, by using  ${Q^{\prime}}_k$, $S_{kj}$, ${Z}_k$, and $\Delta R^{\prime}_k$, the \textit{Evaluator} performs the training process in \eqref{equ:cd}.
In each step, the optimal weight can be computed by \eqref{equ:wk1}, \eqref{equ:wk2}, and \eqref{equ:wk3} for linear, ridge, and lasso regression, respectively.

The \textit{Evaluator} can finally derive $\widehat{w}_k = w_k + r_k$, since $Q^{\prime}_k$ and ${\Delta R^{\prime}}_k$ are perturbated by the \textit{CSP}.
We give the following theorem:

\begin{theorem}\label{the1}
	Given $Q^{\prime}_k$, $S_{k j}$, ${Z}_k$, $\Delta R^{\prime}_k$ and initial weight $\widehat{w}_k$, supoose $w^\ast_k$ is the optimal true weight, the \textit{Evaluator} can derive optimal model weights $\widehat{w}^\ast_k = w^\ast_k + r_k$ by coordinate decent method.
\end{theorem}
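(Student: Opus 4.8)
The plan is to argue by induction over the sequence of coordinate updates that the \textit{Evaluator}'s iterates are exactly an additive shift, by the public noise vector $\bm{r}$, of the iterates that plain coordinate descent would produce from the \emph{unperturbed} quantities. Write $w_k$ for the iterate that Eq.~\eqref{equ:cd} yields from the true $P_k,Z_k$ (with $P_k$ as in Eq.~\eqref{equ:pk1}), and $\widehat{w}_k$ for the iterate the \textit{Evaluator} actually computes from the received $Q'_k$, $S_{kj}$, $Z_k$, $\Delta R'_k$. I will show that $\widehat{w}_k = w_k + r_k$ is preserved by every update; the base case is exactly Eq.~\eqref{equ:sc9}, since the \textit{Evaluator} is initialized with $\widehat{w}_k = w_k + r_k$, and at convergence this gives the claimed $\widehat{w}^{\ast}_k = w^{\ast}_k + r_k$. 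The $\xi_{kj}$-scaling of $S_{kj}$ plays no role here: it is stripped off before training, and in any case $S_{kk}=0$.

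The crux is a single algebraic identity. When updating coordinate $k$, the \textit{Evaluator} forms, by instantiating the template of Eq.~\eqref{equ:pk2} with the quantities in hand, the value
\[
\widehat{P}_k \;=\; Q'_k - \sum_{j=0}^{n} S_{kj}\,\widehat{w}_j - \Delta R'_k .
\]
Substituting $Q'_k = Q_k + 2\,\Delta R_k$ and $\Delta R'_k = \Delta R_k - Z_k r_k$ from Eq.~\eqref{equ:qkrk}, and writing $\sum_j S_{kj}\widehat{w}_j = \sum_j S_{kj} w_j + \Delta R_k$ by the inductive hypothesis together with the identity $\Delta R_k = \sum_{j} S_{kj} r_j$ from the definition following Eq.~\eqref{equ:pk2}, the three $\Delta R_k$ contributions cancel and one is left with $\widehat{P}_k = P_k + Z_k r_k$, where $P_k = Q_k - \sum_j S_{kj} w_j$ is the true value at the current iterate. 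Feeding this into the linear-regression rule \eqref{equ:wk1} gives $\widehat{w}_k = \widehat{P}_k/Z_k = P_k/Z_k + r_k = w_k + r_k$, closing the induction; letting the iteration index grow and using convergence of plain coordinate descent (as recalled in Sec.~\ref{pre}) yields $\widehat{w}^{\ast}_k = w^{\ast}_k + r_k$.

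For ridge and lasso the same computation of $\widehat{P}_k$ goes through, but the final step --- substituting rule \eqref{equ:wk2} or \eqref{equ:wk3} in place of \eqref{equ:wk1} --- is where I expect the real work. For ridge, $\widehat{P}_k/(Z_k+\lambda)$ equals $w^{\ast}_k + r_k$ only if the noise the \textit{CSP} injects into $\Delta R_k$ carries the regularizer, i.e.\ $\Delta R'_k = \Delta R_k - (Z_k+\lambda)\,r_k$ rather than the linear-model choice in Algorithm~\ref{alg:dec}; the proof must therefore make explicit that the perturbation is the one matched to the model being trained, and then the cancellation is identical. For lasso the additive shift $Z_k r_k$ interacts with the soft-thresholding: on the two non-zero branches one still gets $\widehat{w}_k = (\widehat{P}_k \mp \lambda/2)/Z_k = w^{\ast}_k + r_k$, but ensuring the \textit{Evaluator} stays on the same branch as the true update, and treating the $w^{\ast}_k = 0$ branch consistently, requires an extra hypothesis (stability of the active set, or $|Z_k r_k|$ small relative to the margin $\bigl||P_k| - \lambda/2\bigr|$). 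I would state that hypothesis up front and verify the identity branch by branch; the remaining steps are routine bookkeeping.
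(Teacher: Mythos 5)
Your core computation is exactly the paper's proof of Theorem~\ref{the1}: both arguments substitute $Q'_k = Q_k + 2\Delta R_k$ and $\Delta R'_k = \Delta R_k - Z_k r_k$ from Eq.~\eqref{equ:qkrk}, together with $\widehat{w}_j = w_j + r_j$, into $P'_k = Q'_k - \sum_j S_{kj}\widehat{w}_j - \Delta R'_k$, cancel the three $\Delta R_k$ contributions, and arrive at $P'_k = P_k + Z_k r_k$, from which the update rule produces the additive shift $r_k$. The paper performs this for a single update of the lasso rule and stops; your explicit induction over the sweep of Eq.~\eqref{equ:cd}, with Eq.~\eqref{equ:sc9} as the base case, is the honest way to turn that one-step identity into a statement about the converged iterates. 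On the linear-regression case your write-up is the same proof, done more carefully.

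Your two caveats are not hedging; they point at real gaps in the paper's own argument. For ridge, with the perturbation of Algorithm~\ref{alg:dec} the rule \eqref{equ:wk2} gives $(P_k + Z_k r_k)/(Z_k+\lambda) = w^\ast_k + Z_k r_k/(Z_k+\lambda) \neq w^\ast_k + r_k$, so the claimed shift fails unless the \textit{CSP} instead uses $\Delta R'_k = \Delta R_k - (Z_k+\lambda)r_k$, exactly as you observe; the paper proves only the lasso case ``as example'' and never checks this. For lasso, the paper writes the two non-zero branches of \eqref{equ:wk3} as a single ``$\pm\lambda/2$'' and silently omits the dead-zone branch, where the \textit{Evaluator} would output $0$ rather than $0 + r_k$; moreover the branch is selected by comparing $P'_k = P_k + Z_k r_k$ rather than $P_k$ against $\pm\lambda/2$, so without your active-set/margin hypothesis (or a secure comparison on the true $P_k$, which the paper mentions only in its cost analysis) the Evaluator can land on the wrong branch. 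In short: same approach and same key identity, but your version correctly identifies the hypotheses under which the theorem as stated is actually true.
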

\begin{proof}
Let $P^{\prime}_k$ be the perturbated $P_k$. According to Eq.\eqref{equ:pk2} and \eqref{equ:qkrk}, for each $k = 0,\cdots,n$, we have 
$$
\begin{aligned}
P^{\prime}_{k} &= Q_{k}^{\prime}-\sum_{j=0}^{n} S_{k j} \widehat{w}_{j} - \Delta R^{\prime}_{k}\\
&=Q_k + 2\ \Delta R_{k} - \sum_{j=0}^{n} S_{k j} (w_j+r_j) - (\Delta R_{k} - Z_kr_k)\\
&=P_{k} + Z_kr_k.
\end{aligned}
$$
Taking lasso regression as example, the optimal results in each step is illustrated by Eq. \eqref{equ:wk3}, and thus we have:
$$
	\begin{aligned}
		\widehat{w}^\ast_{k}&=\frac{P^{\prime}_{k} \pm \lambda / 2}{Z_{k}} 
		=\frac{P_{k} + Z_kr_k \pm \lambda / 2}{Z_{k}} 
	=w^\ast_{k}+r_{k}
	\end{aligned}
$$
\end{proof}

After a number of iterations when the solution of the coordinate descent method converges, the \textit{Evaluator} sends the final optimal model weight $\widehat{w}^\ast_k$ to all \textit{DO}s.
Then each \textit{DO} performs the denoising operation locally via $w^\ast_k=\widehat{w}^\ast_k-r_k,(k=0, \cdots, n)$,  and finally obtains the global true regression model weights.

\subsection{Security Analysis}
\label{security}
There are three goals introduced in section \ref{sec:system}.
The first goal is to ensure each \textit{DO} deriving effective model weights, and has been addressed.
In this subsection, we analyze two other goals regarding the security of our FCD scheme.
We conduct security analyses in view of the \textit{DO}, the \textit{CSP}, and the \textit{Evaluator}, under the semi-honest-but-curious threat model given in section \ref{security protect}.

\subsubsection{Protection of \textit{DO}'s Private Data}
\textit{DO}'s private data will not be leaked to other parties, i.e., \textit{CSP} and \textit{Evaluator}.
Firstly, under the architecture of our FCD scheme, \textit{DO}'s private data is not transmitted to other parties.
They only share some encrypted intermediate quantities to the \textit{CSP}, e.g., $E[q^{(l)}_k]$, $E[s^{(l)}_{k j}]$,$E[z^{(l)}_k]$, $E[\Delta r^{(l)}_k]$.
Since the \textit{CSP} doesn't possess the sk, they have no access to these quantities.
Secondly, both \textit{CSP} and \textit{Evaluator} can derive the plaintext of aggregations of some intermediate quantities (with noise), i.e., $Q_k$, $S_{k j}$, ${Z}_k$, and $\Delta R_k$.
But it is easy to see that they couldn't infer any private information or data from these quantities.
To this end, \textit{DO}'s private data is effectively protected under our scheme.

\subsubsection{Protection on the Model Weights}
\label{protection}
Another essential security goal of our scheme is to guarantee accurate model weight private to both \textit{CSP} and \textit{Evaluator}.
From \textit{Theorem} \ref{the1} we can see that the \textit{Evaluator} can only derive optimal model weight $\widehat{w}^\ast_k = w^\ast_k + r_k$, which is inaccurate for each dimension $k(k=0,\cdots,n)$. 
The gap between actual optimal weight and inferred weight is $r_k$.
For \textit{CSP}, we give the following theorem.

\begin{theorem}\label{the2}
	Given $Q_k$, $S^{\prime}_{k j}$, ${Z}_k$ and $\Delta R_k$, suppose $w^\ast_k$ and $\widetilde{w}^\ast_k$ are the optimal actual weight and the inferred weight by \textit{CSP}. Let $|1-\xi_{kj}|\geq \epsilon$ where $\epsilon>0$ is an error bound. We have 
	$$
	|\widetilde{w}^\ast_k- w^\ast_k|\ \geq\ \epsilon \sum_{j=0}^{n} |S_{kj} w_j / Z_k |.
	$$
\end{theorem}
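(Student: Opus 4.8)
The strategy is to track how the multiplicative perturbation $\xi_{kj}$ propagates through the coordinate descent update. Recall that the \textit{CSP} sees $Q_k$, $S'_{kj}=\xi_{kj}S_{kj}$, $Z_k$, and $\Delta R_k$, and (per Weight Stolen Attack 2) runs the coordinate descent method itself. The key observation is that when the \textit{CSP} forms its version of $P_k$ from Eq.~\eqref{equ:pk1}, it must use $S'_{kj}$ in place of $S_{kj}$, so its perturbed quantity is
$$
\widetilde{P}_k = Q_k - \sum_{j=0}^{n} S'_{kj}\widetilde{w}_j = Q_k - \sum_{j=0}^{n}\xi_{kj}S_{kj}\widetilde{w}_j .
$$
The cleanest way to expose the gap is to consider the fixed point of the \textit{CSP}'s iteration and compare it to the true fixed point. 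At convergence, the true weights satisfy $w^\ast_k = P_k/Z_k$ (taking linear regression, or the appropriate branch of \eqref{equ:wk3} / \eqref{equ:wk2} otherwise), while the \textit{CSP}'s converged weights satisfy $\widetilde{w}^\ast_k = \widetilde{P}_k/Z_k$ with $\widetilde P_k$ evaluated at $\widetilde w^\ast$.

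The main step is then to subtract these two fixed-point relations. I would write
$$
\widetilde{w}^\ast_k - w^\ast_k = \frac{\widetilde{P}_k - P_k}{Z_k}
= \frac{1}{Z_k}\left( \sum_{j=0}^{n} S_{kj} w^\ast_j - \sum_{j=0}^{n}\xi_{kj}S_{kj}\widetilde{w}^\ast_j \right)
= \frac{1}{Z_k}\sum_{j=0}^{n} S_{kj}\bigl( w^\ast_j - \xi_{kj}\widetilde{w}^\ast_j \bigr),
$$
and then decompose $w^\ast_j - \xi_{kj}\widetilde w^\ast_j = (1-\xi_{kj})w^\ast_j + \xi_{kj}(w^\ast_j - \widetilde w^\ast_j)$. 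Taking absolute values, using $|1-\xi_{kj}|\ge\epsilon$ on the first term and treating the second term as a lower-order correction, gives the stated bound $|\widetilde{w}^\ast_k - w^\ast_k| \ge \epsilon\sum_{j=0}^{n}|S_{kj}w_j/Z_k|$ (with $w_j$ identified with the true optimum $w^\ast_j$).

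**Main obstacle.** The delicate point is the sign/cancellation issue: the summands $S_{kj}(1-\xi_{kj})w^\ast_j$ need not all have the same sign, so one cannot literally pull the sum outside the absolute value without an extra argument. The honest fix is either (i) to interpret the claim coordinate-wise before convergence — i.e., bound the \emph{per-iteration} deviation induced on $w_k$ when a single $\xi_{kj}$ is perturbed while the other $\widetilde w_j$ are momentarily held at their true values, so that the sum collapses to one term and the bound is clean; or (ii) to assume the $\xi_{kj}$ are drawn so that $S_{kj}(1-\xi_{kj})w^\ast_j$ share a sign (e.g.\ $\xi_{kj}<1$ for all $k,j$ together with sign conditions on $S_{kj}w^\ast_j$), which is plausible given the \textit{Evaluator} controls the $\xi_{kj}$. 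I expect the paper's proof to take route (i), reading "the inferred weight by \textit{CSP}" as the weight the \textit{CSP} would compute for coordinate $k$ in a single step of \eqref{equ:cd} from the correct values of the other coordinates, in which case $\widetilde P_k - P_k = -\sum_j(\xi_{kj}-1)S_{kj}w^\ast_j$ and, picking out the dominant term or summing a fixed-sign family, the inequality follows directly from $|1-\xi_{kj}|\ge\epsilon$ and the triangle inequality. The remaining steps — substituting $\widehat P'_k$/$Z_k$ expressions and invoking the convergence of coordinate descent established implicitly by Theorem~\ref{the1} — are routine.
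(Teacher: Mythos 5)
Your proposal follows essentially the same route as the paper. The paper's proof is exactly your one-step comparison: it writes
$$
|\widetilde{w}^\ast_k- w^\ast_k| = \left|\frac{\sum_{j=0}^{n} (S_{kj}- S^{\prime}_{kj})w_j}{Z_k} \right|
=\left|\frac{\sum_{j=0}^{n} (1- \xi_{kj})S_{kj}w_j}{Z_k} \right|,
$$
using the \emph{same} $w_j$ in both the true and the perturbed update (i.e.\ your route (i), with the other coordinates held fixed), and then asserts the final lower bound directly. So your identification of the perturbed quantity $\widetilde P_k$ and the difference $\sum_j(1-\xi_{kj})S_{kj}w_j/Z_k$ is precisely what the paper does.

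The ``main obstacle'' you flag is real, and the paper does not address it: the last step
$\left|\sum_{j}(1-\xi_{kj})S_{kj}w_j\right| \geq \epsilon \sum_{j}|S_{kj}w_j|$
does not follow from $|1-\xi_{kj}|\geq\epsilon$ alone, because the triangle inequality bounds the absolute value of a sum from \emph{above}, not below. A two-term counterexample: $S_{k1}w_1=S_{k2}w_2=1$, $\xi_{k1}=1-\epsilon$, $\xi_{k2}=1+\epsilon$ gives a left-hand side of $0$ and a right-hand side of $2\epsilon$. The inequality only holds under an extra same-sign condition on the terms $(1-\xi_{kj})S_{kj}w_j$ (which the signs of $S_{kj}w_j$ alone cannot guarantee, even if all $\xi_{kj}<1$), or if the claim is weakened to perturbing a single coordinate at a time. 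Your proposal is therefore more honest than the paper's own argument: you correctly isolate the missing hypothesis, whereas the paper silently assumes it. To make either proof rigorous one must add such a sign assumption on the Evaluator's choice of $\xi_{kj}$, or restate the bound with $\left|\sum_j S_{kj}w_j/Z_k\right|$ in place of $\sum_j|S_{kj}w_j/Z_k|$ together with a constant $\xi_{kj}=\xi$.
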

\begin{proof}
	We take lasso regression as example.
	 According to Eq.\eqref{equ:sc17} we have $S^{\prime}_{kj} = S_{kj}\cdot \xi_{kj}$.
	 Combining the optimal weight illustrated by Eq.\eqref{equ:wk3} and Eq.\eqref{equ:pk2}, we have
	 $$
		\begin{aligned}
			|\widetilde{w}^\ast_k- w^\ast_k| &= \left|\frac{\sum_{j=0}^{n} (S_{kj}- S^{\prime}_{kj})w_j}{Z_k} \right|\\
			&=\left|\frac{\sum_{j=0}^{n} (1- \xi_{kj})S_{kj}w_j}{Z_k} \right|\\
			&\geq \epsilon \sum_{j=0}^{n} |S_{kj} w_j / Z_k |.
		\end{aligned}
	$$
\end{proof}
From \textit{Theorem} \ref{the2}, we can see that the lower bound of the difference  between the optimal actual weight $w^\ast_k$ and the inferred weight $\widetilde{w}^\ast_k$ is $\epsilon \sum_{j=0}^{n} |S_{kj} w_j / Z_k |$, for each dimension $k(k=0,\cdots,n)$.
As a result, the \textit{CSP} cannot obtain or infer the accurate global regression model weights.

\section{Experimental Settings}
\label{experiment}
In this section, we conduct experiments and analyses on our FCD scheme in terms of accuracy, perturbations, computation cost, and communication overhead, respectively.
 
\subsection{Experimental Settings}

\begin{table}[t]
\centering
\caption{Details of the UCI datasets}
\label{Tab:08}
\resizebox{\linewidth}{!}{ 
\begin{tabular}{c c c c c}
\toprule
Id&Dataset&\# Features&\# Samples&References\\
\midrule
1&DD&10&442&~\cite{misc_diabetes_34}\\
2&BHD&13&506&~\cite{Dua:2019}\\
3&AD&8&4177&~\cite{misc_abalone_1}\\
\bottomrule
\end{tabular}
}
\end{table}

\textbf{Datasets.} Three public datasets from the UCI repository ~\cite{Dua:2019} and a synthetic dataset are used in this paper to test the performance of our FCD scheme.

\begin{itemize}
  \item UCI public datasets. We adopt the Boston House dataset (BHD), the Abalone dataset (AD), and the Diabetes dataset (DD) from the UCI machine learning database.
  The UCI datasets are used to test the accuracy of linear regressions under our FCD scheme, compared with the centralized methods.
  The details of UCI datasets are shown in Table \ref{Tab:08}.
  \item Synthetic dataset. We generate a synthetic dataset with $n$ features and $m$ samples. Both the inputs and the outputs of generated samples follow the standard multivariate normal distribution $\mathcal{N}(\textbf{0},\textbf{1})$.
  We vary the number of features and samples to test the computation cost of linear regressions under the FCD schemes.  
\end{itemize}

\noindent\textbf{Baseline.} 
To test the effectiveness of our scheme, we perform three kinds of regressions under FCD, i.e., linear, ridge, and lasso regression, denoted by FCD-Linear, FCD-Ridge, and FCD-Lasso, respectively.
For comparison, we also perform these regressions under the \textit{C}entralized \textit{T}raining \textit{L}inear \textit{R}egression scheme with gradient descent algorithm, and we use CTLR-Linear, CTLR-Ridge, and CTLR-Lasso to denote them.

To test the computation overheads, we compare our regression models in our FCD scheme with PrivFL, a federated learning scheme~\cite{DBLP:journals/corr/abs-2004-02264}.
PrivFL is an existing federated learning scheme based on a single cloud model.
However, PrivFL trains regression models using the gradient descent method, and thus 
only supports linear and ridge regressions.
Since our scheme support lasso regression, we only use PrivFL-Linear and PrivFL-Ridge to denote the regressions trained under the PrivFL scheme, respectively.

\noindent\textbf{Other settings.} We ran our experiments using Python 3.7 on a PC equipped with a 2.30-GHz Intel Core i7-10510U CPU, 8-GB of RAM, and Windows 10 system.

In the paillier encryption system, the key space is in the integer field based on $\mathbb{Z}_{N}$.
Considering that the data before encryption is in float64 units, it is necessary to perform encoding and decoding operations.
Then it can execute the encryption and decryption algorithm for floating-point numbers.
Suppose $a$ is the data to be encrypted in float64, we perform the operation $A=a \times c\ (c={10}^6)$ and encode $a$ to an integer, and then execute the encryption algorithm.
Meanwhile, we make the positive floating-point number encoding range $[0,N/2]$ and the negative floating-point number range $[N/2,n)$ ($N$ is denoted as $N=p\cdot q$ in \ref{paillier}).
Similarly, after decrypting to plaintext $A$, it is necessary to set $a=A/c$ to decode $A$ into plaintext form as a floating-point number.
In our experiments, we set the key length of the paillier encryption algorithm to 1024 bits.

\begin{table}
\centering
\caption{The MAE of regressions under FCD and CLTR schemes}
\label{Tab:01}

\begin{tabular}{p{23pt}p{23pt}p{23pt}p{23pt}p{23pt}p{23pt}p{23pt}}
\toprule
Dataset&FCD-Linear&CTLR-Linear&FCD-Ridge&CTLR-Ridge&FCD-Lasso&CTLR-Lasso\\
\midrule 
DD&0.49038&0.49083&0.57667&0.57660&0.49866&0.49866\\
BHD&0.37830&0.37797&0.33148&0.34904&0.38046&0.38046\\
AD&0.59056&0.59038&0.44202&0.44247&0.59219&0.59219\\
\bottomrule
\end{tabular}

\end{table}

\begin{figure*}[t] 
	\centering  
	\vspace{-0.35cm} 
	\subfigure[Linear regression]{
		\label{fig:outlier3}
		\includegraphics[width=0.30\linewidth]{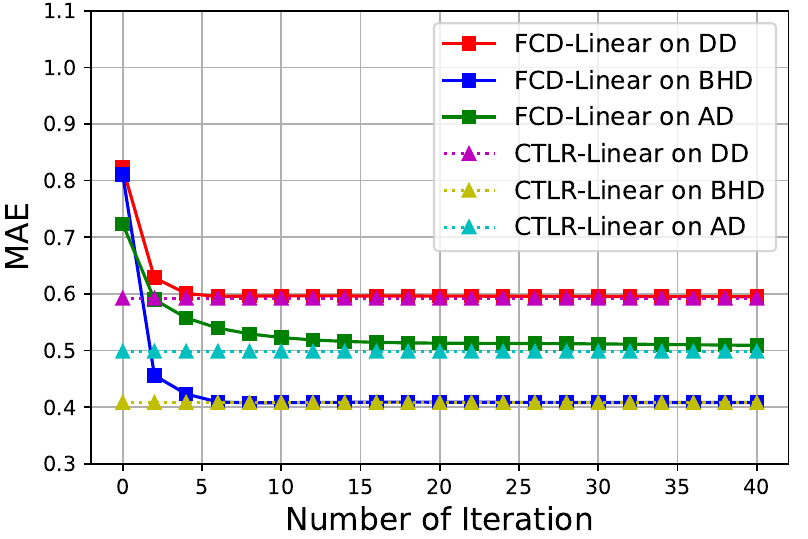}}
	\subfigure[Ridge regression]{
		\label{fig:outlier4}
		\includegraphics[width=0.30\linewidth]{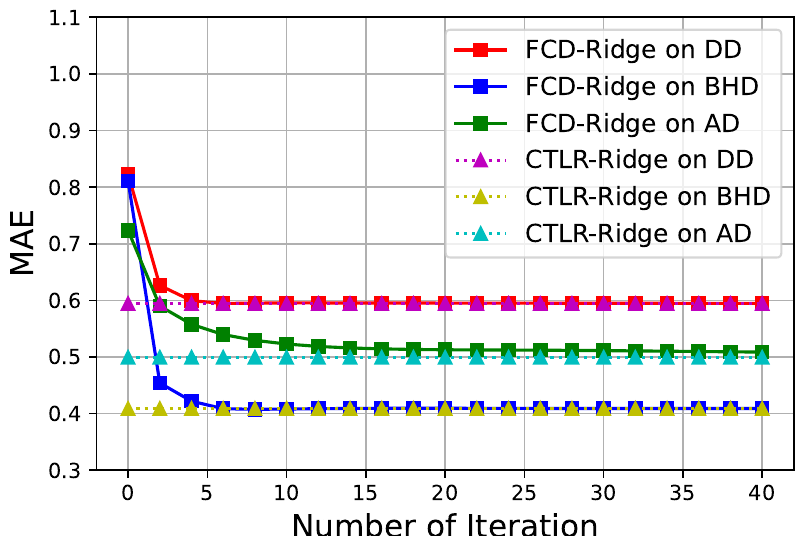}}
	\subfigure[Lasso regression]{
		\label{fig:outlier5}
		\includegraphics[width=0.30\linewidth]{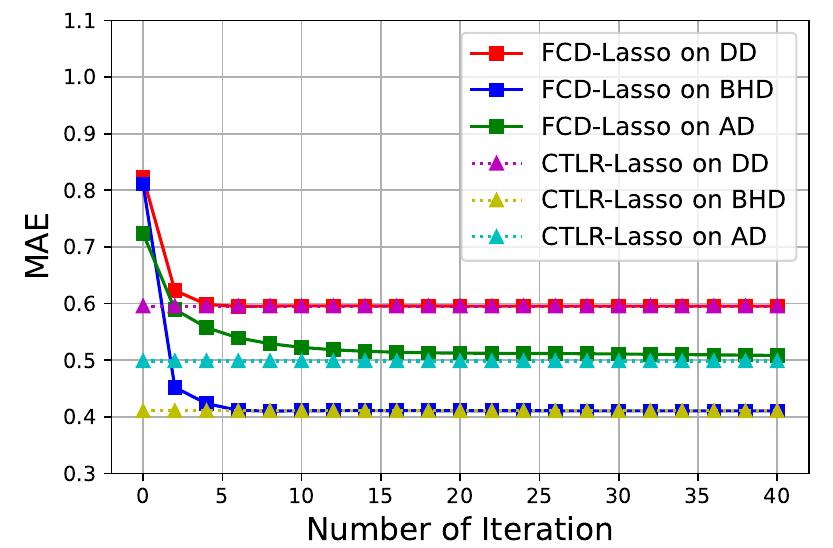}}
	\caption{The MAE of regression models with different number of iteration}
	\label{mae}
\end{figure*}

\begin{figure*}[t] 
	\centering  
	\vspace{-0.35cm} 
	\subfigure[Linear regression]{
		\label{fig:outlier6}
		\includegraphics[width=0.3\linewidth]{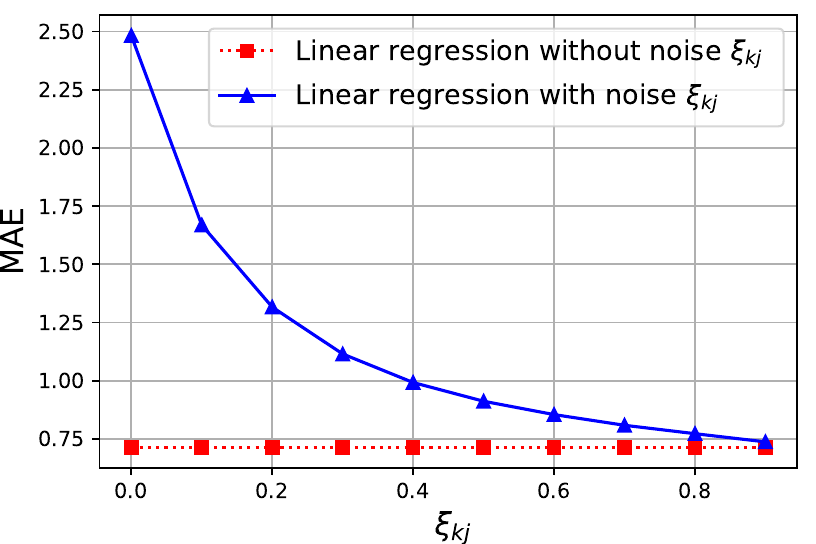}}
	\subfigure[Ridge regression]{
		\label{fig:outlier7}
		\includegraphics[width=0.3\linewidth]{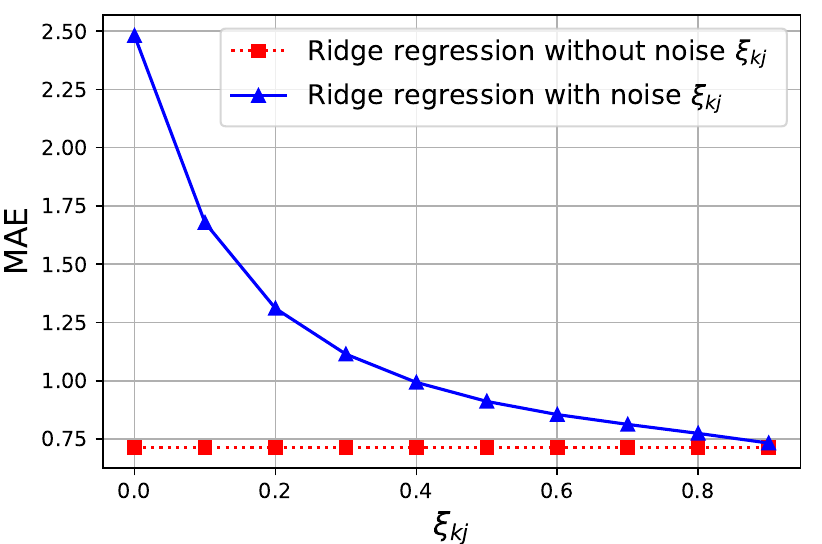}}
	\subfigure[Lasso regression]{
		\label{fig:outlier8}
		\includegraphics[width=0.3\linewidth]{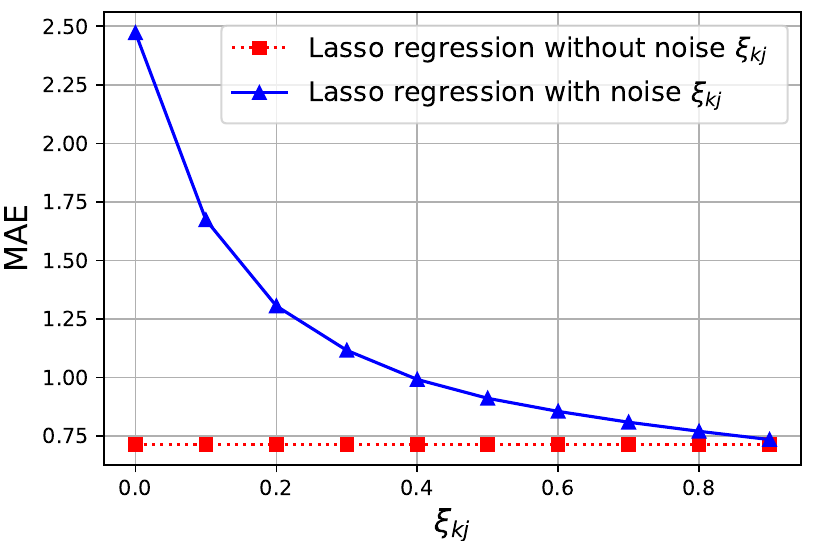}}
	\caption{The impact of MAE  by perturbation ${\xi}_{kj}<1$}
	\label{perturbation-csp}
\end{figure*}

\subsection{Accuracy Evaluation}
To evaluate the accuracy of regressions under our FCD scheme, we use the Mean Absolute Error (MAE) as the measure of model accuracy. 
We first fix the number of iterations $l=1000$ for the three regression models. 
Since ridge and lasso regression contain penalty terms, after several parameter searches, we set their penalty coefficients $\lambda=5$. 

Note that we randomly take 20\% samples from each dataset as the testing dataset. 
The remaining samples are randomly distributed equally to each \textit{DO} for training the global regression model.
The obtained results are shown in Table \ref{Tab:01}.
From the table, we can see that the MAE of the regressions under our distributed FCD scheme is very close to that of the centralized regression models.
Specifically, the MAEs of FCD-Lasso and CTLR-Lasso are exactly the same.
It means that our FCD is very effective with high regression accuracy.

To test the convergence of regression models in FCD, we compare the accuracy of three regression models for different number of iterations.
We record the MAE values of all regression models under both FCD and CTLR schemes on three UCI datasets for every two iterations, as shown in Fig. \ref{mae}.
We can see that for the first several iterations, the MAE values of FCD models are larger than CTLR models.
It is because that for our FCD scheme, data is distributed on multiple \textit{DO}s.
But after ten iterations, the MAE of FCD models is close to CTLR models until convergence.
For at most 20 iterations, our FCD regression models can converge and complete the training.

\begin{figure*}[t]
	\centering  
	\vspace{-0.35cm}
	\subfigure[Linear Regression]{
		\label{fig:outlier13}
		\includegraphics[width=0.30\linewidth]{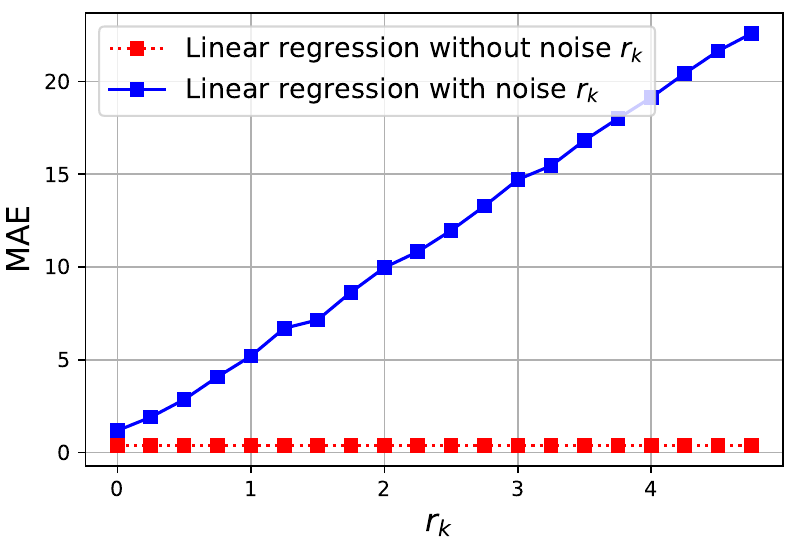}}
	\subfigure[Ridge Regression]{
		\label{fig:outlier14}
		\includegraphics[width=0.30\linewidth]{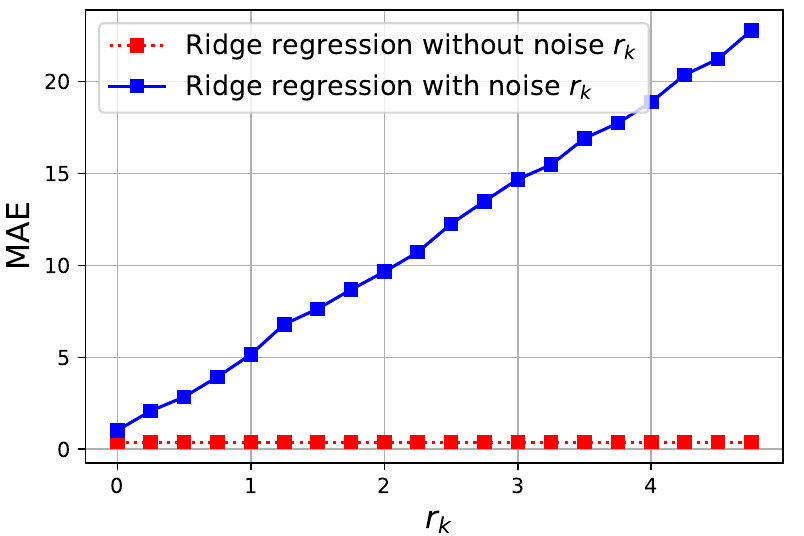}}
	\subfigure[Lasso Regression]{
		\label{fig:outlier15}
		\includegraphics[width=0.30\linewidth]{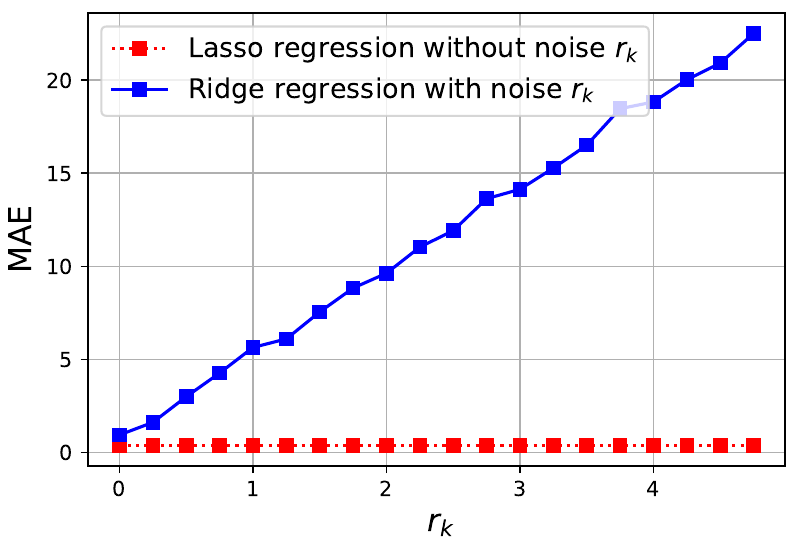}}
	\caption{The impact of MAE by perturbation $r_k$}
	\label{perturbation-rk}
\end{figure*}

\subsection{Perturbation Analysis}
\label{noise exp}
In this subsection, we focus on exploring the corelation between perturbations and the regression MAE values. 
Accoding to our results, we give the suggested values for perturbation numbers.
We take the number of iterations $l=50$, and AD as the test datasets.
\subsubsection{Perturbation analysis on $\xi_{kj}$}
$\xi_{kj}$ is one of the vital noises, which are added in $S^{\prime}_{kj}$ by the \textit{Evaluator} to prevent the \textit{CSP} from inferring the global model weights.
We investigate the impact of the value of $\xi_{kj}$ on the MAE values.
Since we adopted the paillier homomorphic scalar multiplication operations, $\xi_{kj} =1$ means an unsuccessful perturbation.
When we set $\xi_{kj} <1$, we show the MAE results in Fig. \ref{perturbation-csp}.
The red curve represents the model without any noise, and the blue curve denotes the model with perturbations. 
We can see that when $\xi_{kj}$ is close to 0, the MAE value goes higher, which means the model is well protected.
As the ${\xi}_{k j}$ is close to 1, the protection performance for the model becomes weaker.
\begin{table}
	\centering
	\caption{The MAE with  perturbation $\xi_{kj}\geq1$}
	\label{Tab:03}
	\begin{tabular}{c c c c} 
	\toprule 
	$\xi_{k j}$&FCD-Linear&FCD-Ridge&FCD-Lasso\\
	\midrule 
	1&0.7132&0.7135&0.7137\\
	1.02&1.56282E+13&5.31133E+16&1.54837E+13\\
	1.04&1.38258E+29&1.17751E+29&4.21769E+28\\
	1.06&2.09276E+38&1.09463E+43&7.26065E+39\\
	1.08&1.59273E+52&9.66648E+57&8.1003E+51\\
	1.10&1.99883E+70&3.92258E+67&2.45967E+71\\
	\bottomrule
	\end{tabular}
\end{table}

We also take 0.02 as the step size to conduct our experiments for ${\xi}_{kj}$ in $[1,1.1]$. We use a table to show the MAE results in Table \ref{Tab:03}.
We can see that the MAE increases vastly as ${\xi}_{k j}$ increases slightly.
To this end, we suggest the value of the perturbation ${\xi}_{k j}$ to $(0,0.2]$ or $[1.02,1.1]$.

\subsubsection{Perturbation analysis on $r_k$}
Perturbation $r_k$ is another significant noise added to the initial weights and intermediate quantities.
Without $r_k$, the \textit{Evaluator} can easily infer model weights.
We ask how the value of $r_k$ affects the MAE measure.
We take $r_{k} \in[0,5]$ in steps of 0.25, while fixing the number of iterations $l = 50$.
We illustrate the results in Fig. \ref{perturbation-rk}.
In the figure, the red curve represents the regression model without any noise, and the blue curve denotes the model with perturbation $r_k$.

From the figure, we can find that as $r_k$ increases, the MAE values of the perturbated model also increase, and the increasement is nearly linear to $r_k$. 
Thus, to protect the model and prevent the \textit{Evaluator} from trying to infer any information about the global regression model, we suggest the value of $r_k$ to be $[2, \infty)$.

\begin{table*}[t]
	\centering
	\caption{The computation cost of FCD and PrivFL schemes}
	\label{Tab:02}
 \resizebox{\textwidth}{!}{
\begin{tabular}{ c c c } 
	\toprule
	&FCD-Linear&PrivFL-Linear\\
	\midrule
	\textit{DO}&$2M\cdot(3n+n^2)\cdot\left(t_{exp}+t_{mul}\right)+(m\cdot(2n+n^{2})+n^{2})$ & $t\cdot\left((m+n+2m\cdot n)\cdot t_{mul}+2\cdot(m+n+m\cdot n+1)\cdot t_{exp}\right)$\\
	\textit{Evaluator}\&\textit{CSP}&$(3n+n^2)\cdot\left(t_{inv}+(M+2)\cdot t_{mul}+t_{exp}\right)+n^2\cdot t_{exp}+t\cdot(n^{2}+n)$&$ t\cdot\left((n+2M+1)\cdot t_{mul}+(2n+M+2)\cdot t_{exp}+M\cdot t_{{inv}}\right)$\\
	\bottomrule
	\end{tabular}
 }
	
\end{table*}

\subsection{Computation Cost}
In this section, we perform computation cost analysis through experiments. 
We analyze the computation cost theoretically and compare it with an existing federated regression scheme PrivFL ~\cite{DBLP:journals/corr/abs-2004-02264}. 
\subsubsection{Theoretical analysis}
We mainly consider the complex paillier computation operations in this paper as previous work ~\cite{isci/WangZLZL21} for the sake of simplicity.
Let $t_{mul}$, $t_{exp}$ and $t_{inv}$ denote the computation cost of modular multiplication, modular exponentiation, and modular inverse operations, respectively.
Note that $m$ denotes the number of samples, $n$ denotes the number of features, $M$ denotes the number of \textit{DO}s, and $t$ denotes the number of iterations.

According to our FCD scheme, \textit{DO} needs to compute and encrypt intermediate quantities firstly, so the computation cost of \textit{DO} is $2M\cdot(3n+n^2)\cdot\left(t_{exp}+t_{mul}\right)+(m\cdot(2n+n^{2})+n^{2})$. 
In the data aggregation and computation phase, \textit{Evaluator} needs to add noise to the aggregated data, and \textit{CSP} needs to decrypt the intermediate quantities.
The computation cost in \textit{Evaluator} and \textit{CSP} is: $M\cdot(3n+n^2)\cdot t_{mul}+n^2\cdot t_{exp}+t\cdot (n^2+n)$ and $(3n+n^2)\cdot\left(t_{exp}+2t_{mul}+t_{inv}\right)$ , respectively.
For linear and ridge regression of our FCD scheme, the total computation cost of \textit{Evaluator} and \textit{CSP} is $(3n+n^2)\cdot\left(t_{inv}+(M+2)\cdot t_{mul}+t_{exp}\right)+n^2\cdot t_{exp}+t\cdot (n^2+n)$.

For the lasso regression, since we use a secure multiparty computation algorithm to compare $P_k$ and $\pm \lambda/2$, an additional computation cost is required as $6n\cdot t\cdot t_{mul}+4n\cdot t\cdot t_{exp}+2n\cdot t\cdot\left(t_{exp}+2t_{mul}+t_{inv}\right)$. 
Thus, the computation cost in \textit{CSP} and \textit{Evaluator} of lasso regression is $\left(3n+n^2\right)\cdot\left(t_{inv}+\left(M+2\right)\cdot t_{mul}+t_{exp}\right)+6n\cdot t\cdot t_{mul} +4n\cdot t\cdot t_{exp}+n^2\cdot t_{exp}+2n\cdot t\cdot\left(t_{exp}+2t_{mul}+t_{inv}\right)+t\cdot (n^2+n)$.

We compare the computation cost between FCD and PrivFL schemes in Table \ref{Tab:02}. 
For the PrivFL, \textit{DO} calculates the global gradients encrypted in the cloud  during each iteration. 
Therefore, the computation cost of \textit{DO} and the cloud in PrivFL tends to increase linearly as the number of iterations and the number of samples increase.
However, for our FCD scheme, \textit{DO} only needs to encrypt the intermediate quantities and upload them to \textit{Evaluator} for model training. 
Thus, encryption operation is required on each \textit{DO} only once.
Our scheme significantly reduces the computation cost.

\subsubsection{Experiments on Synthetic dataset}

In this paper, we explore factors affecting the computation cost of our regression model, i.e., the number of features $n$, the number of samples $m$, the number of \textit{DO}s $M$, and the number of iterations $l$. 
We perform experiments on the synthetic dataset. 
Since the linear and ridge regression models have the same cost, we omit the ridge regression and only test the linear and lasso regression.

\noindent\textbf{Number of features.}
We set $m=1000,M=5,l=50$, and change the number of features $n$ in steps of 5. 
The experimental results are shown in Fig. \ref{features}.
We can see that the computation cost at \textit{DO}, \textit{CSP}, and \textit{Evaluator} increases by $O(n^2)$ as the feature $n$ increases.
The cost increase at \textit{DO} is more significant.
Simultaneously, the computation cost of lasso regression at \textit{CSP} and \textit{Evaluator} is much higher due to the secure multiparty computation.

\noindent\textbf{Number of samples.}
We set $n=10, M=5, l=50$, and change the number the samples $m$ in steps of 500 for the experiment.
The experimental results are shown in Fig. \ref{samples}.
It is clear that for both linear and lasso regression, the computation cost on \textit{Evaluator} and \textit{CSP} keeps stable when the number of samples increases.
The computation cost on \textit{DO} increases by $O(m)$ as the number of samples $m$ increases.
Therefore, we believe that our scheme maintains a low computational cost after  \textit{DO}s finish their encryption.
\begin{figure}[tbp] 
	\centering 
	\vspace{-0.35cm} 
	\subfigure[Linear regression]{
		\label{fig:features1}
		\includegraphics[width=0.47\linewidth]{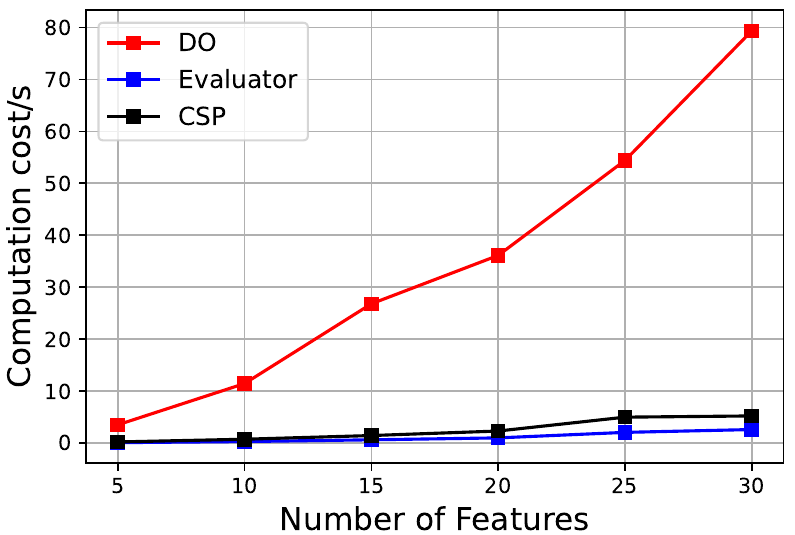}}
	\subfigure[Lasso regression]{
		\label{fig:features2}
		\includegraphics[width=0.48\linewidth]{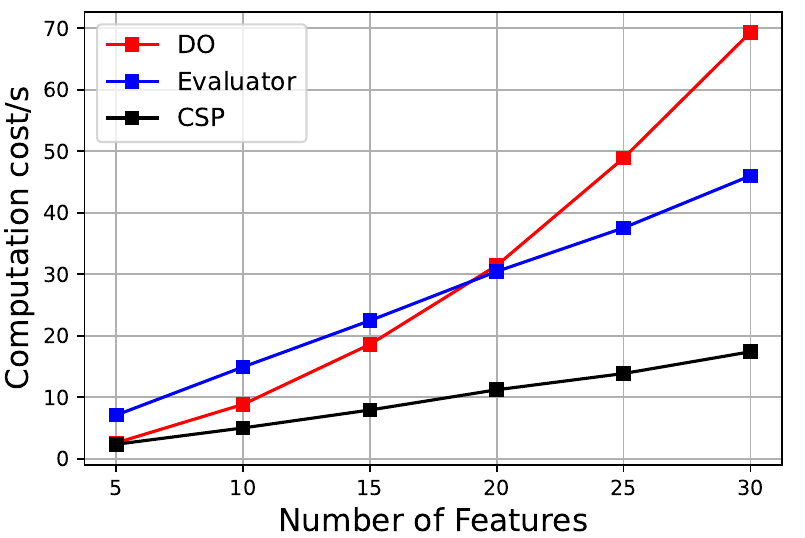}}
	\caption{Computation cost in different number of features}
	\label{features}
\end{figure}

\begin{figure}[tbp] 
	\centering 
	\subfigure[Linear regression]{
		\label{fig:Samples1}
		\includegraphics[width=0.47\linewidth]{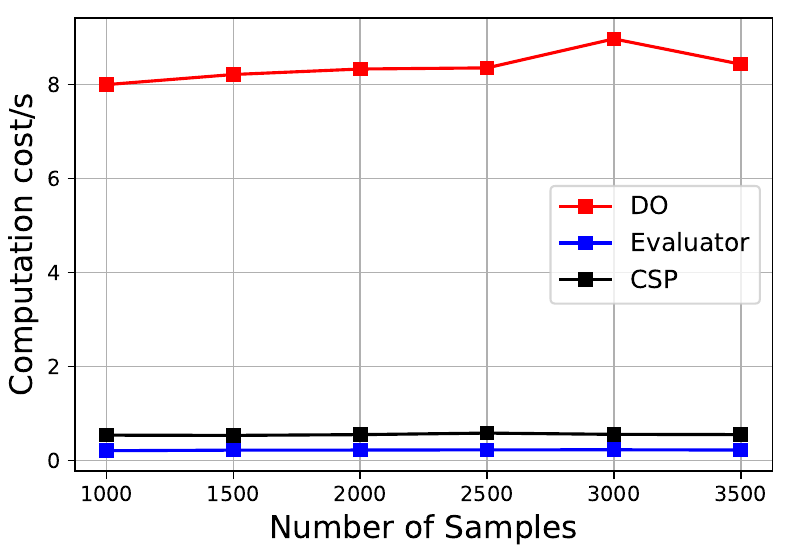}}
	\subfigure[Lasso regression]{
		\label{fig:Samples2}
		\includegraphics[width=0.48\linewidth]{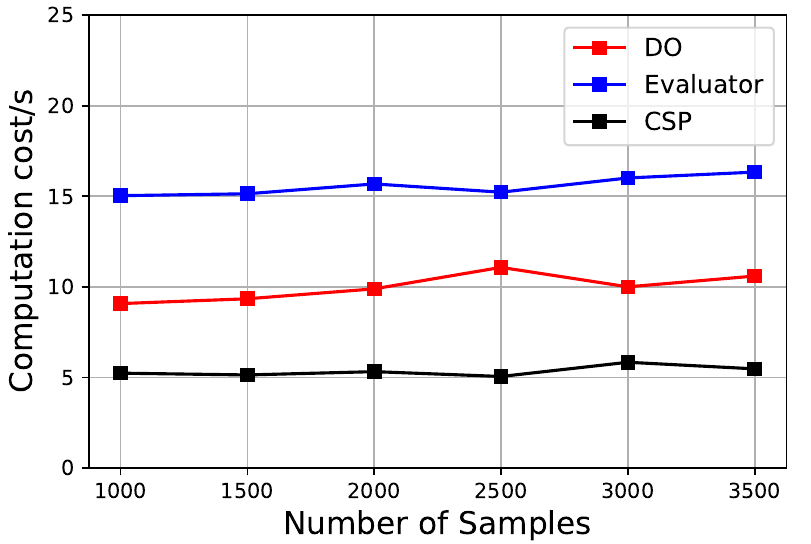}}
	\caption{Computation cost in different number of samples}
	\label{samples}
\end{figure}

\begin{figure}[tbp] 
	\centering 
	\vspace{-0.35cm} 
	\subfigtopskip=2pt
	\subfigbottomskip=2pt 
	\subfigcapskip=-5pt 
	\subfigure[Linear regression]{
		\label{fig:DO1}
		\includegraphics[width=0.45\linewidth]{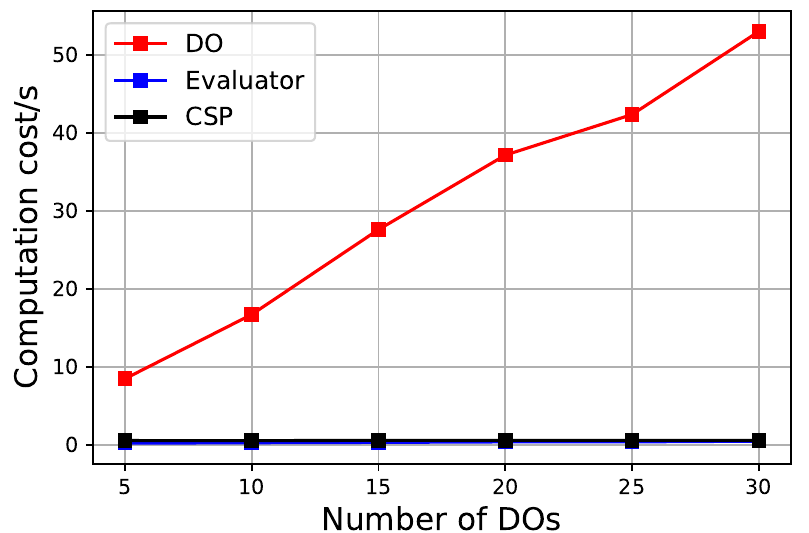}}
	\subfigure[Lasso regression]{
		\label{fig:DO2}
		\includegraphics[width=0.45\linewidth]{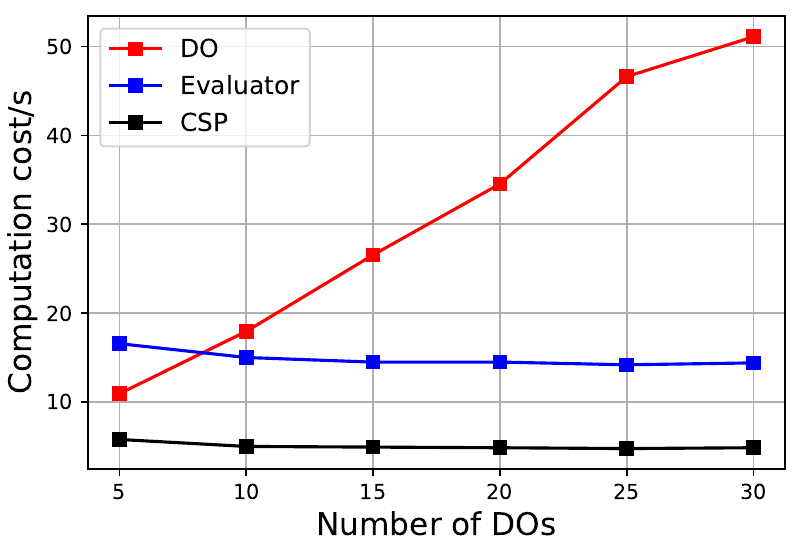}}
	\caption{Computation cost in different number of \textit{DO}s}
	\label{DOs}
\end{figure}

\noindent\textbf{Number of \textit{DO}s.}
We set $n=10, m=1000, t=50$, and change the number the \textit{DO}s $M$ in steps of 5 for the experiment.
The experimental results are shown in Fig. \ref{DOs}. 
We can see that for both linear and lasso regression, the computation cost in \textit{DO} increases linearly with the number of \textit{DO}s, which is also consistent with our theoretical analysis.
Meanwhile, the computation cost in the \textit{CSP} and \textit{Evaluator} keeps stable.

\noindent\textbf{Number of iterations.}
We set $n=10, m=1000, M=5$, and change the number of the iterations $t$ in steps of 20 for the experiment.
The computation cost results in \textit{Evaluator} and \textit{CSP} are shown in Fig. \ref{Iterations}.
Since only \textit{Evaluator} and \textit{CSP} are involved in the training phase, Fig. \ref{Iterations} doesn't display computation cost in \textit{DO}.
We can find that for linear regression, the computation cost of \textit{Evaluator} increases linearly as the number of iterations increases, while \textit{CSP} maintains a more stable computation time.
The computation cost of \textit{Evaluator} rises slowly with the number of iterations increases, while \textit{CSP} remains stable. 
For lasso regression, the computation cost of \textit{Evaluator} and \textit{CSP} grows more significantly because of the multiparty security comparison algorithm, but it is still acceptable.

\begin{figure}[tbp] 
	\centering  
	\vspace{-0.35cm} 
	\subfigtopskip=2pt 
	\subfigbottomskip=2pt 
	\subfigcapskip=-5pt 
	\subfigure[Linear regression]{
		\label{fig:Iterations1}
		\includegraphics[width=0.45\linewidth]{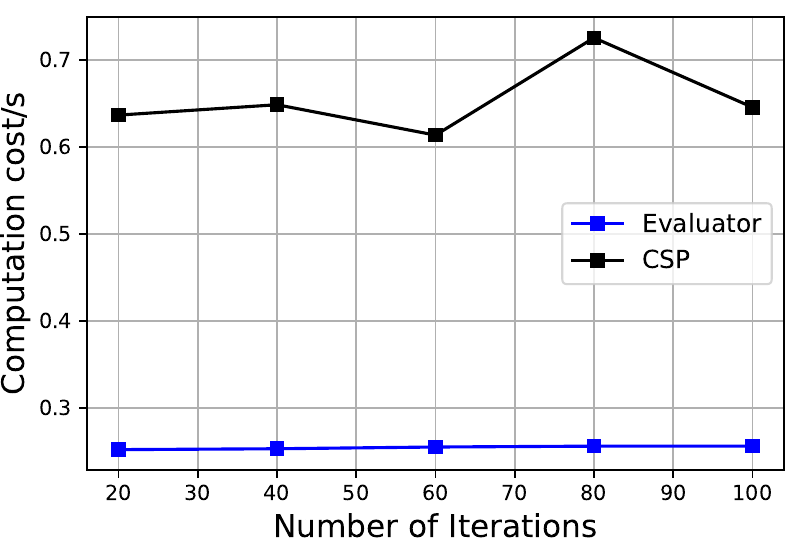}}
	\subfigure[Lasso regression]{
		\label{fig:Iterations2}
		\includegraphics[width=0.45\linewidth]{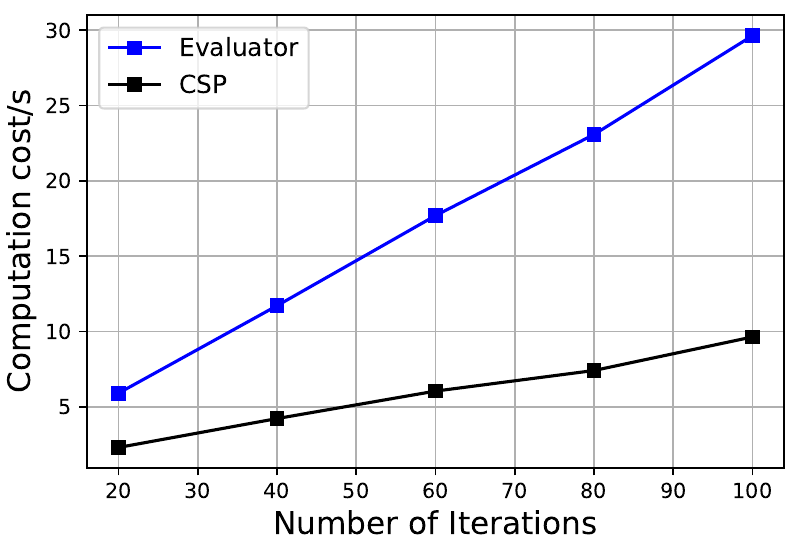}}
	\caption{Computation cost in different number of iterations}
	\label{Iterations}
\end{figure}

\subsection{Communication overhead}
In this subsection, we perform a theoretical analysis of our FCD scheme on communication overhead and compare it with PrivFL.

For the FCD-ridge and FCD-linear regression model, we assume that the key length is $K$ bits, which results in a communication overhead of $2K$ bits per encrypted element.
In the local computation phase, \textit{DO} needs to encrypt the local computation data before uploading it to \textit{Evaluator}, so the communication overhead of the scheme in this phase is $2M\cdot(n^2+3n)\cdot K$.
In the data aggregation phase, \textit{Evaluator} needs to send the aggregated encrypted intermediate quantities to the \textit{CSP} for decryption.
The communication overhead between the \textit{Evaluator} and \textit{CSP} is $(n^2+3n)\cdot2K$.
Note that the decryption data type is float64, accounting for 64 bits, so the communication overhead of the whole training process is $ \left(n^2+3n\right)\cdot\left(M+1\right)\cdot 2K+64\left(n^2+3n\right)$.
For lasso regression, the multiparty secure comparison algorithm is an additional communication overhead.
As the length of an integer is 32 bits, the total communication overhead of lasso regression is $\left(n^2+3n\right)\cdot\left(M+1\right)\cdot 2K+64\left(n^2+3n\right)+(4K+32)\cdot n\cdot t$.

\begin{table}[tb]
\centering
\caption{The communication overhead of FCD and PrivFL schemes}
\label{Tab:09}
\resizebox{\linewidth}{!}{ 
\begin{tabular}{ccc} 
\toprule
&FCD&PrivFL\\
\midrule
Linear/Ridge&$ \left(n^2+3n\right)\cdot\left(M+1\right)\cdot 2K+64\left(n^2+3n\right)$& $2M\cdot\left(n+2\right)\cdot t\cdot K$\\
Lasso&$\left(n^2+3n\right)\cdot\left(M+1\right)\cdot 2K+64\left(n^2+3n\right)+(4K+32)\cdot n\cdot t$& \textbackslash\\
\bottomrule
\end{tabular}
}
\end{table}

We summarize the communication overhead of the FCD and PrivFL schemes in Table .\ref{Tab:09}. 
From the table, we can find that our FCD scheme outperforms PrivFL in  communication overhead if more iterations are needed for convergence.
Firstly, for linear and ridge regressions, our scheme requires communication between \textit{DO} and \textit{Evaluator} only once, and also communication once between \textit{Evaluator} and \textit{CSP}.
But the PrivFL scheme needs to execute data transfer under ciphertext during each iteration. 
As a result, the communication overhead of the PrivFL increases linearly as the number of iterations increases.
Secondly, our scheme support lasso regression, which couldn't be performed under the PrivFL scheme.

\section{Conclusion}
\label{conlusion}
In this paper, we propose a federated scheme for multiparty linear regression, which introduces homomorphic encryption and perturbation techniques to prevent possible data and model weights leakage during the collaborative training phases.
The most significant advantage of our proposed scheme is a coordinate descent solution for secure multiparty lasso regression, which hasn't been well addressed as yet, since the gradient descent-based method could not deal with $\mathcal{L}_1$ normalization term.
We also give security theoretical analyses on our scheme.
The experimental results have demonstrated that our FCD scheme has achieved as competitive results as centralized regression methods, in terms of prediction performance, computational cost, and communication overhead.

\section*{Acknowledgments}

This work was supported in part by the National Natural Science Foundation of China (Grant No. 61802124, 61931019), and the Fundamental Research Funds for the Central Universities (Grant No. 2021MS089).

\bibliographystyle{cas-model2-names}

\bibliography{cas-refs}

\end{document}